\title{Improved Bound for Mixing Time of Parallel Tempering}
\newcommand*\samethanks[1][\value{footnote}]{\footnotemark[#1]}
\author{Holden Lee\thanks{Mathematics Department, Duke University. Email: \texttt{holden.lee@duke.edu}, \texttt{zeyu.shen@duke.edu}.} \and Zeyu Shen\samethanks[1]}
\date{}
\newtheorem{lemma}{Lemma}
\newtheorem{theorem}{Theorem}
\newcommand\RR{\mathbb{R}}
\newcommand\Var{\mathbf{Var}}
\definecolor{color1}{rgb}{0.7, 0.2, 0.2}
\definecolor{color2}{rgb}{0.0, 0.4, 0.0}
\definecolor{color3}{rgb}{0.2, 0.4, 0.7}
\begin{document}

\maketitle

\begin{abstract}
    In the field of sampling algorithms, MCMC (Markov Chain Monte Carlo) methods are widely used when direct sampling is not possible. However, multimodality of target distributions often leads to slow convergence and mixing. One common solution is parallel tempering. Though highly effective in practice, theoretical guarantees on its performance are limited. In this paper, we present a new lower bound for parallel tempering on the spectral gap that has a polynomial dependence on all parameters except $\log L$, where $(L + 1)$ is the number of levels. This improves the best existing bound which depends exponentially on the number of modes. Moreover, we complement our result with a hypothetical upper bound on spectral gap that has an exponential dependence on $\log L$, which shows that, in some sense, our bound is tight.
\end{abstract}

\section{Introduction}
A key problem in statistics, computer science, and statistical physics is to draw samples given access to the probability density function, up to a constant of proportionality. Because it is often hard to draw independent samples from the target distribution directly, Markov Chain Monte Carlo (MCMC) methods are often used instead. However, a common difficulty for typical MCMC methods is that for strongly multimodal distributions, MCMC methods take unreasonably long time to reach stationarity. Parallel tempering is an MCMC algorithm that is widely used in sampling from multimodal distributions. Though highly effective in practice, theoretical guarantees on its performance are limited. Since large spectral gap implies fast mixing, a common way to obtain an upper bound on mixing time is to obtain a lower bound on spectral gap. \cite{conditions} discusses conditions for rapid mixing of parallel tempering in sampling from multimodal distributions, and presents a lower bound on the spectral gap that depends exponentially on the number of modes.

%Since large spectral gap implies fast mixing, a lower bound on spectral gap induces an upper bound on mixing time

In this paper, we present a lower bound on the spectral gap for parallel tempering that has a polynomial dependence on all parameters except $\log L$, where $(L + 1)$ is the number of levels. We show this with a similar canonical path method as in \cite{conditions}, but with a more involved path. This bound has a quasi-polynomial dependence on all parameters, and beats the best existing bound in \cite{conditions} in most cases. We complement our result with a hypothetical upper bound on spectral gap that also has an exponential dependence on $\log L$, which shows that our bound is tight in some sense.

\section{Preliminaries}
Consider a measurable space $(\mathcal{X}, \mathcal{F}, \lambda)$. Usually $\mathcal{X} = \RR^d$ and $\lambda$ is Lebesgue measure, but more general spaces are possible. Suppose we want to draw samples from a distribution $\mu$ on $(\mathcal{X}, \mathcal{F})$, and we do this by simulating a Markov chain with transition kernel $P$, where $\mu$ is the stationary distribution of $P$. If $P$ is $\phi$-irreducible and aperiodic (defined as in \cite{spectralgap}), nonnegative definite and $\mu$-reversible, then this chain converges in distribution to $\mu$ at a rate bounded by the {\em spectral gap} \cite{spectralgap}
$$\textbf{Gap}(P) = \inf_{f \in L_2(\mu)} \left(\frac{\mathcal{E}(f, f)}{\Var_\mu(f)}\right),$$
where $\mathcal{E}(f, f)$ is the Dirichlet form $\langle f, (I - P)f\rangle_\mu$. In particular, for any distribution $\mu_0$ having a density with respect to $\mu$, we have \cite{convergence}
$$\left\Vert \mu_0P^n - \mu\right\Vert_{\text{TV}} \leq \sqrt{\mathcal{X}^2\left(\mu \Vert \mu_0\right)}e^{-n\textbf{Gap}(P)},\:\:\forall n \in \mathbb{N},$$
where $\Vert\cdot \Vert_{\text{TV}}$ is the total variation distance. This explains why large spectral gap implies fast mixing.

\paragraph{Parallel tempering.} Parallel tempering is a classical algorithm that aims to obtain faster mixing in sampling from multimodal distributions. It achieves this by extending the state space with an ``inverse temperature term" $\beta$ that smooths out the distribution to make transitions between different modes easier. When we sample from a distribution $\pi$, each temperature is associated with a stationary distribution $\pi_i$; a natural choice for $\pi_i$ is $\pi^{\beta_i}$, which has fast mixing when $\beta_i$ is small.

The detailed procedure for parallel tempering goes as follows. Suppose we want to sample from a distribution $\pi$. We choose a set of $L + 1$ inverse temperatures $\beta_0 < \cdots < \beta_L = 1$, and construct $L + 1$ levels of chains, with the stationary distribution of the $i^{\text{th}}$ level being $\pi_i$. We run a chain for each value of $\beta$ so that we have $L + 1$ levels of chains running in parallel; we are only interested in the samples at $\beta_L$. In each iteration, we propose swaps of samples between adjacent temperature levels and accept with probability equal to the Metropolis-Hastings ratio. The procedure is formally stated in Algorithm~\ref{pt}.

\begin{algorithm}
\SetAlgoLined
Input: Temperatures $\{\beta_i\}_{i = 0}^L$, Markov chains with stationary distributions $\{\pi_i\}_{i = 0}^L$ and transition kernels $\{T_i\}_{i = 0}^L$, maximum number of iterations $N$\;
Output: Random samples $\{\theta_n\}_{n = 1}^N \subset \mathcal{X}$\;
Initialization: Randomly initialize $\left(\theta_0^0, \ldots, \theta_L^0\right)$\;

\For{$n = 1, \ldots, N$}{
\For{$i = 0, \ldots, L$}{
Propose $\widetilde{\theta_i}$ from $\theta_{i}^{n - 1}$ according to the transition kernel $T_i$\;
Accept $\widetilde{\theta_i}$ with probability $\alpha\left(\widetilde{\theta_i}, \theta_i^{n - 1}\right) = \min\left\{1, \frac{\pi_i\left(\widetilde{\theta_i}\right)}{\pi_i\left(\theta_i^{n - 1}\right)}\right\}$\;
\eIf{\normalfont{Accept}
}{$\theta_i^n \leftarrow \widetilde{\theta_i}$\;}{$\theta_i^n \leftarrow \theta_i^{n - 1}$\;}
}
\For{$i = 1, \ldots, L$}{
Propose swap between the $(i - 1)^{\text{st}}$ chain and the $i^{\text{th}}$ chain\;
Accept swap with probability $\alpha\left(\theta_{i - 1} \leftrightarrow \theta_{i}\right) = \min\left\{1, \frac{\pi_{i - 1}\left(\theta_{i}^n\right)\pi_i\left(\theta_{i - 1}^n\right)}{\pi_i(\theta_{i})\pi_{i - 1}(\theta_{i - 1} )}\right\}$\;
\If{\normalfont{Accept}
}{$\left(\theta_{i - 1}^n, \theta_{i}^n\right) \leftarrow \left\{\theta_{i}^n, \theta_{i - 1}^n\right\}$\;}
}

Add $\theta_L^n$ to the set of samples\;
}
 \caption{Parallel Tempering}
 \nllabel{pt}
\end{algorithm}

%Both algorithms have advantages and disadvantages. Simulated tempering has more theoretical guarantee on its performance \cite{Simulated_Tempering}, though we have to estimate the normalizing constant in each iteration. On the other hand, parallel tempering has fewer theoretical guarantee, but there is no need to estimate the normalizing constant which cancels out in computing the acceptance probability, thus more convenient. 

\paragraph{Basic Notations.} We use $[n]$ to denote the set $\{1, \ldots, n\}$. Let $\pi$ be the distribution that we want to sample from, which is multimodal with $m$ modes. Suppose we run parallel tempering with $L + 1$ levels of chains. Let $\pi_i$  denote tempered version of $\pi$ with inverse temperature $\beta_i$, where $\pi_i \propto \pi^{\beta_i}$. Let $\mathcal{X}_{pt} = \mathcal{X}^{L + 1}$ denote the state space of the parallel tempering chain. Let $P_{pt}$ denote the transition kernel of the parallel tempering chain. Moreover, given state $\xi = (\xi_0, \ldots, \xi_{k_1},\ldots,\xi_{k_2},\ldots, \xi_L)$, we let $\xi_{[i, k]} = (\xi_0, \ldots, \xi_{i - 1}, k, \xi_{i + 1}, \ldots, \xi_L)$, and $(k_1, k_2)\xi = (\xi_0, \ldots, \xi_{k_2},\ldots,\xi_{k_1},\ldots, \xi_L)$. 

\paragraph{Transition Kernels.} We use $T_i$ to denote the transition kernel at $\beta_i$, which is reversible with respect to $\pi_i$, and use $T$ to denote the transition kernel on $\mathcal{X}_{pt}$. Let $Q$ denote the transition kernel for the swap move. For each update and swap move, we add $\frac{1}{2}$-holding probability to ensure nonnegative definiteness so that the spectral gap translates to mixing time bound. Then, $T$ is given by
$$T\left(\theta, \text{d}\widetilde{\theta}\right) = \frac{1}{2(L + 1)}\sum_{i = 0}^L T_i\left(\theta_i, \text{d}\widetilde{\theta}_i\right)\delta_{\theta_{(-i)}}\left(\widetilde{\theta}_{(-i)}\right)\text{d}\widetilde{\theta}_{(-i)}, \:\:\forall \theta, \widetilde{\theta} \in \mathcal{X}_{pt},$$
where $\theta_{(-i)} = (\theta_0, \ldots, \theta_{i - 1}, \theta_{i + 1}, \ldots, \theta_L)$, $\delta$ is the Dirac's delta function, and often $T_i$ is a Metropolis-Hastings kernel with respect to $\pi_i$; and $Q$ is given by
$$Q(\theta, A) = \frac{1}{2L}\sum_{i = 1}^{L}\textbf{1}_A\bigg((i - 1, i)\theta\bigg)\alpha(\theta_{i - 1}\leftrightarrow \theta_{i}) + \textbf{1}_A(\theta)\left[1 - \frac{1}{2L}\sum_{i = 1}^{L}\alpha(\theta_{i - 1}\leftrightarrow \theta_{i})\right],\:\:\forall A \subset \mathcal{X}_{pt}, \theta \in \mathcal{X}_{pt},$$
where $\alpha(\theta_{i - 1}\leftrightarrow \theta_{i})$ is the probability of accepting a swap between $\theta_{i - 1}$ and $\theta_i$, and $\textbf{1}_A$ is the indicator function of set $A$.

\paragraph{Restriction of Transition kernel to a set.} For transition kernel $P$ reversible with respect to a distribution $\mu$ and any subset $A$ of the state space of $P$, define the restriction of $P$ to $A$ as
$$P\mid_A(\theta, B) = P(\theta, B) + \textbf{1}_B(\theta)P(\theta, A^c), \:\:\forall \theta \in A, B \subset A.$$

\paragraph{Projection of Transition kernel.} Take any partition $\mathcal{A} = \{A_k: k = 1, \ldots, m\}$ of the state space of $P$ such that $\pi(A_k) > 0$ for all $k$, and define the projection matrix of $P$ with respect to $\mathcal{A}$ as
$$\overline{P}(k_1, k_2) = \frac{1}{\pi(A_{k_1})}\int_{A_{k_1}}\int_{A_{k_2}}P(\theta, \text{d}\widetilde{\theta})\mu(\text{d}\theta), \:\:\forall k_1, k_2 \in [m].$$

When $\pi$ is multimodal, $\mathcal{A}$ is often chosen such that $\pi |_{A_k}$ is unimodal for each $k$. As in \cite{zhang}, we consider the projected space $[m]^{L + 1}$ of possible assignments of levels to partition elements. For $\lambda = (\lambda_0, \ldots, \lambda_{L}) \in \mathcal{X}_{pt}$, we define the projection of $\theta$ onto $\mathcal{A}$ as
$$\text{proj}_{\mathcal{A}}(\theta) = \lambda = (\lambda_0, \ldots, \lambda_L),$$
such that
$$\lambda_i = k \text{ if } \theta_i \in A_k,$$
and let $\mathcal{X}_{\lambda} = \{\theta \in \mathcal{X}_{pt} | \text{proj}_{\mathcal{A}}(\theta) = \lambda\}$.

\paragraph{Two important quantities.} Next, we define two important quantities. The first one is
$$\phi = \min_{|i - j| = 1, k\in [m]}\int \frac{\min\{\pi_i(\theta), \pi_j(\theta)\}}{\pi_i(A_k)}\lambda(\text{d}\theta),$$
which is the minimum overlapping volume between two adjacent levels. Intuitively, $\phi$ controls the rate of temperature changes between adjacent levels. Note that for any $k_1, k_2 \in [m]$ and $i \in [L]$, the marginal probability at stationarity of accepting a proposed swap between $\theta_{i - 1} \in A_{k_1}$ and $\theta_i \in A_{k_2}$ is
\begin{equation}
\label{eq555}
\frac{\int_{\xi\in A_{k_1}}\int_{\widetilde{\xi}\in A_{k_2}}\min\left\{\pi_{i - 1}(\xi)\pi_{i}(\widetilde{\xi}), \pi_{i - 1}(\widetilde{\xi})\pi_{i}(\xi)\right\}\lambda(\text{d}\xi)\lambda(\text{d}\widetilde{\xi})}{\pi_{i - 1}(A_{k_1})\pi_{i}(A_{k_2})} \geq \phi^2.
\end{equation}

The second one is
\begin{equation*}
\label{bottleneck}
B = \min_{k \in [m]}\prod_{i = 1}^L \min \left\{1, \frac{\pi_{i - 1}(A_k)}{\pi_i(A_k)}\right\},    
\end{equation*}

which is the ``bottleneck ratio" that arises naturally in the process of swapping.

\paragraph{Tool for bounding spectral gaps.} We present the canonical path method \cite{1991}, which is a method for bounding spectral gaps of finite state space Markov chains. Let $P_1$ and $P_2$ be Markov chain transition matrices on state space $\mathcal{X}$ with $|\mathcal{X}| < \infty$, reversible with respect to densities $\pi_{P_1}$ and $\pi_{P_2}$, respectively. Let $\mathcal{E}_{P_1}$ and $\mathcal{E}_{P_2}$ be the Dirichlet forms of $P_1$ and $P_2$, and let $E_{P_1} = \{(\xi, \widetilde{\xi}):\pi_{P_1}(\xi)P_1(\xi, \widetilde{\xi}) > 0\}$ and $E_{P_2} = \{(\xi, \widetilde{\xi}): \pi_{P_2}(\xi)P_2(\xi, \widetilde{\xi}) > 0\}$ be the edge sets of $P_1$ and $P_2$, respectively. 

\begin{theorem}
\label{canonicalpath}
For each pair $\xi \neq \widetilde{\xi}$ such that $(\xi, \widetilde{\xi}) \in E_{P_2}$, fix a path $\gamma_{\xi, \widetilde{\xi}} = (\xi = \xi_0,\xi_1, \ldots, \xi_\ell = \widetilde{\xi})$ of length $|\gamma_{\xi,\widetilde{\xi}}| = \ell$ such that $(\xi_{s - 1}, \xi_s) \in E_{P_1}$ for $s \in [\ell]$. Define the congestion
$$c = \max_{(\tau, \widetilde{\tau}) \in E_{P_1}}\left\{\frac{1}{\pi_{P_1}(\tau)P_1(\tau, \widetilde{\tau})}\sum_{\gamma_{\xi,\widetilde{\xi}}\ni (\tau, \widetilde{\tau})}|\gamma_{\xi,\widetilde{\xi}}|\pi_{P_2}(\xi)P_2(\xi, \widetilde{\xi})\right\}.$$

Then, we have:
$$\mathcal{E}_{P_2} \leq c \mathcal{E}_{P_1}.$$
\end{theorem}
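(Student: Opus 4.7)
The plan is to express $\mathcal{E}_{P_2}(f,f)$ as a sum over edges of $E_{P_2}$ of squared differences of $f$, and for each such edge telescope $f(\widetilde{\xi}) - f(\xi)$ along the canonical path $\gamma_{\xi,\widetilde{\xi}}$. By reversibility of $P_2$ with respect to $\pi_{P_2}$,
$$\mathcal{E}_{P_2}(f,f) = \tfrac{1}{2} \sum_{(\xi, \widetilde{\xi}) \in E_{P_2}} \pi_{P_2}(\xi) P_2(\xi, \widetilde{\xi}) \bigl(f(\widetilde{\xi}) - f(\xi)\bigr)^2.$$
Writing $f(\widetilde{\xi}) - f(\xi) = \sum_{s=1}^{|\gamma_{\xi,\widetilde{\xi}}|} \bigl(f(\xi_s) - f(\xi_{s-1})\bigr)$ and applying Cauchy--Schwarz yields
$$\bigl(f(\widetilde{\xi}) - f(\xi)\bigr)^2 \leq |\gamma_{\xi,\widetilde{\xi}}| \sum_{s=1}^{|\gamma_{\xi,\widetilde{\xi}}|} \bigl(f(\xi_s) - f(\xi_{s-1})\bigr)^2,$$
which converts the squared difference across a $P_2$-edge into a sum of squared differences across $P_1$-edges, exactly what is needed to compare with $\mathcal{E}_{P_1}$.

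Next I would swap the order of summation. After substitution, the result is a double sum over pairs $\bigl((\xi, \widetilde{\xi}), (\tau, \widetilde{\tau})\bigr)$ where $(\tau, \widetilde{\tau})$ appears as one of the edges $(\xi_{s-1}, \xi_s)$ of the canonical path $\gamma_{\xi, \widetilde{\xi}}$. Reindexing with $(\tau, \widetilde{\tau}) \in E_{P_1}$ as the outer variable gives
$$\mathcal{E}_{P_2}(f,f) \leq \tfrac{1}{2} \sum_{(\tau, \widetilde{\tau}) \in E_{P_1}} \bigl(f(\widetilde{\tau}) - f(\tau)\bigr)^2 \sum_{\gamma_{\xi,\widetilde{\xi}} \ni (\tau, \widetilde{\tau})} |\gamma_{\xi, \widetilde{\xi}}|\, \pi_{P_2}(\xi) P_2(\xi, \widetilde{\xi}).$$
Multiplying and dividing the inner sum by $\pi_{P_1}(\tau) P_1(\tau, \widetilde{\tau})$ lets us invoke the definition of the congestion: the inner factor is then bounded by $c \cdot \pi_{P_1}(\tau) P_1(\tau, \widetilde{\tau})$.

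Substituting this bound back and recognizing the remaining weighted sum of squared edge-differences as $2\mathcal{E}_{P_1}(f,f)$ gives $\mathcal{E}_{P_2}(f,f) \leq c\,\mathcal{E}_{P_1}(f,f)$. I do not anticipate a real obstacle: this is the classical Sinclair / Diaconis--Saloff-Coste canonical-path comparison, and the only delicate point is the bookkeeping when exchanging summation order between ``edges of $E_{P_2}$ together with their canonical paths'' and ``edges of $E_{P_1}$ together with the paths that traverse them.'' Once that combinatorial reindexing is written out, the congestion bound plugs in directly and the proof concludes.
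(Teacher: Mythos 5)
Your argument is correct and is the standard Sinclair / Diaconis--Saloff-Coste canonical-path comparison: telescope $f(\widetilde{\xi})-f(\xi)$ along $\gamma_{\xi,\widetilde{\xi}}$, apply Cauchy--Schwarz to pull out the length factor $|\gamma_{\xi,\widetilde{\xi}}|$, reindex the double sum so that $P_1$-edges $(\tau,\widetilde{\tau})$ become the outer variable, and then bound the inner sum by $c\,\pi_{P_1}(\tau)P_1(\tau,\widetilde{\tau})$ via the definition of the congestion. The paper itself does not prove Theorem~\ref{canonicalpath} but cites it from the literature, so there is no in-paper proof to diverge from; the route you outline is exactly the one intended.
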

Note that, in Theorem~\ref{canonicalpath}, different paths can have different lengths.

\section{Lower Bound on the Spectral Gap}
In this section, we present a lower bound on spectral gap for parallel tempering. The bound is formally stated in Theorem~\ref{main}.
\begin{theorem}
\label{main}
Given any partition $\mathcal{A} = \{A_k: k = 1, \ldots, m\}$ of $\mathcal{X}$ such that $\pi_i[A_k] > 0$ for all $i$ and $k$, we have
$$\normalfont{\textbf{Gap}}(P_{pt}) \geq \frac{\phi^2 B^{O(\log L)}}{O(m^3 (L + 1)^{3 + 2\log_23})}\textbf{Gap}(\overline{T}_1)\min_{i, k}\textbf{Gap}(T_i|_{A_k}).$$
\end{theorem}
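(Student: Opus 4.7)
The plan is to prove the bound in three stages: a Markov chain decomposition with respect to the partition $\{\mathcal{X}_\lambda\}_{\lambda\in[m]^{L+1}}$, a straightforward product-chain estimate for each restricted piece, and a divide-and-conquer canonical path argument for the projected chain.

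I would first invoke the decomposition theorem used in the same form by \cite{zhang} and \cite{conditions}, giving an inequality of the shape
$$\textbf{Gap}(P_{pt})\ \geq\ \tfrac{1}{2}\,\textbf{Gap}(\overline{P}_{pt})\ \cdot\ \min_{\lambda}\textbf{Gap}\bigl(P_{pt}|_{\mathcal{X}_\lambda}\bigr),$$
reducing the task to separately lower-bounding the projected gap and each restricted gap. For the restricted chain, inside a single $\mathcal{X}_\lambda$ the swap move $Q$ is either an identity step (when two adjacent coordinates of $\lambda$ agree) or would leave $\mathcal{X}_\lambda$ and is folded into holding probability, while the local move at level $i$ acts as $T_i|_{A_{\lambda_i}}$ with rate $\frac{1}{2(L+1)}$. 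A standard product-chain comparison then yields
$$\min_\lambda \textbf{Gap}\bigl(P_{pt}|_{\mathcal{X}_\lambda}\bigr)\ \geq\ \frac{1}{2(L+1)}\,\min_{i,k}\,\textbf{Gap}\bigl(T_i|_{A_k}\bigr).$$

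The heart of the proof is bounding $\textbf{Gap}(\overline{P}_{pt})$ below by roughly $\phi^{2}B^{O(\log L)}\bigl/\bigl(m^{3}(L+1)^{2\log_2 3}\bigr)\cdot\textbf{Gap}(\overline{T}_1)$ via Theorem~\ref{canonicalpath}. I would set $P_2$ to be the chain that updates coordinate~$1$ by $\overline{T}_1$ (identity on the other coordinates) and $P_1=\overline{P}_{pt}$. For each $P_2$-edge $(\xi,\widetilde\xi)$, which differs only in coordinate~$1$, the naive one-coordinate-at-a-time path would pass through assignments whose probabilities are controlled only by $B^{\Theta(L)}$, which is too weak. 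Instead I would build the canonical path recursively: to impose a target label at a given level, I split the temperature ladder in half, recursively install that label at the midpoint (where a single top-level update is used), and then transport it to the target endpoint by a chain of adjacent swap moves whose stationary acceptance is lower-bounded by $\phi^2$ via \eqref{eq555}. The logarithmic recursion depth is exactly what caps the bottleneck contribution at $B^{O(\log L)}$ instead of $B^{\Theta(L)}$.

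The hard part will be controlling the congestion. Each recursive step splits a problem into a constant number of sub-problems—at most three, corresponding to the left half, the right half, and the midpoint hand-off—and produces paths through several intermediate assignments, so the number of canonical paths crossing a given $\overline{P}_{pt}$-edge inflates by a constant factor at each of $O(\log L)$ recursion levels, giving an $(L+1)^{\log_2 3}$ multiplicity on each side of the congestion ratio and hence the $(L+1)^{2\log_2 3}$ denominator factor. Summing over the $O(m^3)$ choices of unspecified intermediate labels along a path, tracking the path-length factor, and combining with the $O(L)$ factors from the decomposition and the product-chain step delivers the $(L+1)^{3+2\log_2 3}$ factor in the denominator. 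I expect the detailed accounting of these combinatorial factors, together with verifying that the recursive construction never forces a bottleneck state beyond the $B^{O(\log L)}$ budget, to be the most delicate step.
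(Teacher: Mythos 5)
Your outer scaffolding is right—the decomposition $\textbf{Gap}(P_{pt})\geq \tfrac12\,\textbf{Gap}(\overline{P}_{pt})\min_\lambda\textbf{Gap}(P_{pt}|_{\mathcal{X}_\lambda})$ and the product-chain estimate on each restricted piece match Lemmas~\ref{lem6} and~\ref{lem7}, and the divide-and-conquer swap idea, the $O(\log L)$ recursion depth, and the $(L+1)^{\log_2 3}$ path-length factor are exactly the content of Algorithm~\ref{algo2} and Lemmas~\ref{lem1},~\ref{lem4},~\ref{lem5}. But the middle of the argument, where you try to extract $\textbf{Gap}(\overline{T}_1)$, has a genuine gap.

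You propose to apply Theorem~\ref{canonicalpath} with $P_1=\overline{P}_{pt}$ and with $P_2$ the chain that applies $\overline{T}_1$ to coordinate~$1$ and is the identity on all other coordinates. That $P_2$ is not irreducible on $[m]^{L+1}$: it can never change any coordinate other than coordinate~$1$, so $\textbf{Gap}(P_2)=0$, and the comparison $\textbf{Gap}(P_2)\leq c\,\textbf{Gap}(P_1)$ gives only $0\leq c\,\textbf{Gap}(\overline{P}_{pt})$, which is vacuous. More fundamentally, the factor $\textbf{Gap}(\overline{T}_1)$ cannot come out of the canonical path comparison at all, because the congestion $c$ is a ratio of transition probabilities and stationary weights and never touches the spectral gap of a projected base chain. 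The paper instead inserts an intermediate $\overline\pi$-reversible chain $P_1$ (with probability $\tfrac12$ transition by $\overline{Q}$; with probability $\tfrac{1}{2(L+1)}$ resample coordinate~$0$ from $\pi_0[A_\cdot]$; otherwise hold) and quotes Lemma~\ref{lem111} from \cite{conditions}, $\textbf{Gap}(\overline{P}_{pt})\geq\tfrac14\,\textbf{Gap}(P_1)\,\textbf{Gap}(\overline{T}_1)$, which is where the $\textbf{Gap}(\overline{T}_1)$ factor is produced. Only then is $P_1$ compared by canonical paths against the ergodic Gibbs-type chain $P_2$ that resamples a uniformly chosen coordinate $i$ from $\pi_i[A_\cdot]$; that $P_2$ has $\textbf{Gap}(P_2)=\tfrac{1}{L+1}$, which supplies one more $(L+1)$ in the denominator.

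A further consequence you should watch for: because the useful $P_1$ only resamples at level~$0$, the canonical path cannot "recursively install the label at the midpoint" as you describe. Algorithm~\ref{algo3} always performs the resampling at level~$0$ (twice, plus a restoration step) and uses the recursive \textbf{Swap}$(0,i)$ only to ferry the new label up to level $i$ and back. If your construction resamples at arbitrary midpoints, those moves are not $P_1$-edges, so Theorem~\ref{canonicalpath} cannot be applied. Once you restrict to level-$0$ resampling the rest of your accounting (congestion multiplicity over $O(\log L)$ recursion levels, path length $O(L^{\log_2 3})$, bottleneck budget $B^{O(\log L)}$ via \eqref{eq555}, and the two extra $m$ factors from the free starting label and the sum over $k$) lines up with Lemmas~\ref{lem114514}--\ref{lem5} and would close the proof.
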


Note that the bound in \cite{conditions} gives $\textbf{Gap}(P_{pt}) \geq \frac{\phi^2 B^{m + 3}}{2^{12} m^3 (L + 1)^{4}}\textbf{Gap}(\overline{T}_1)\min_{i, k}\textbf{Gap}(T_i|_{A_k})$, which has an exponential dependence on $m$ and is superseded by our exponential dependence on $\log L$ in most cases. Essentially, our improvement stems from the construction of a more involved path for the canonical path method: the congestion of the path depends on the number of samples that have to be moved to a different level; however, the path in \cite{conditions} effectively has to move the samples in every level to a different level. To get a better lower bound, we can actually use a recursive construction, so that only samples in $O(\log L)$ levels are moved to a different level.

To prove Theorem~\ref{main}, we first note that $\overline{P}_{pt}$ is reversible with respect to the probability mass function
\begin{equation}
\label{eq0}
\overline{\pi}(\lambda) \equiv \prod_{i = 0}^L\pi_i(A_{\lambda_i}), \quad\forall \lambda = (\lambda_0, \ldots, \lambda_L)\in [m]^{L + 1}.
\end{equation}

%todo: define \bar{T_1}, T_i, etc.
%seems like these are the only staffs missing?

We consider a transition kernel $P_1$ constructed as follows: with probability $\frac{1}{2}$ transition according to $\overline{Q}$, or with probability $\frac{1}{2(L + 1)}$ draw $\lambda_{0}$ according to the distribution $\{\pi_0[A_k]: k \in [m]\}$; otherwise hold. Clearly, $P_1$ is also reversible with respect to $\overline{\pi}$, so $\overline{P}_{pt}$ and $P_1$ have the same stationary distribution. We borrow the following inequality from \cite{conditions}.

\begin{lemma}
\label{lem111}
\normalfont{(Equation (17) in \cite{conditions})} $\normalfont{\textbf{Gap}}(\overline{P}_{pt}) \geq \frac{\textbf{Gap}(P_1)\textbf{Gap}(\overline{T}_1)}{4}.$
\end{lemma}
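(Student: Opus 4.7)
The plan is to compare the Dirichlet forms of $\overline{P}_{pt}$ and $P_1$ directly, exploiting that both kernels are mixtures built from shared ingredients --- the swap kernel $\overline{Q}$ and, at each level, a within-level update. The essential difference is that $P_1$ replaces the within-level update at the hottest level by the idealised ``fresh draw'' kernel $\Pi$ that resamples that coordinate from its marginal, whereas $\overline{P}_{pt}$ uses the actual projected kernel $\overline{T}_1$. Substituting $\Pi$ back by $\overline{T}_1$ costs exactly a factor of $\textbf{Gap}(\overline{T}_1)$, which is precisely the missing ingredient in the lemma.

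Concretely, I would first use linearity of the Dirichlet form under convex combinations of reversible kernels with the same stationary distribution to lower bound
\[
\mathcal{E}_{\overline{P}_{pt}}(f,f) \;\geq\; \tfrac{1}{c_1}\mathcal{E}_{\overline{Q}}(f,f) \;+\; \tfrac{1}{c_2(L+1)}\mathcal{E}_{\overline{T}_1^{\mathrm{ext}}}(f,f),
\]
where $\overline{T}_1^{\mathrm{ext}}$ denotes the chain on $[m]^{L+1}$ that applies $\overline{T}_1$ to the relevant coordinate only, and contributions from updates at other levels are simply dropped (which only weakens the bound). The constants $c_1, c_2$ --- and more generally the factor $4$ in the conclusion --- absorb the $1/2$-laziness added to $T$ and $Q$ for nonnegative definiteness, together with whatever convex combination of $T$ and $Q$ produces $P_{pt}$.

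The heart of the argument is a fibre-wise Poincar\'e inequality. For each fixed assignment $\lambda_{-h}$ to the non-hot coordinates, the section $g(\cdot) := f(\cdot, \lambda_{-h})$ lies in $L^2(\pi_1)$, so the variational characterisation of $\textbf{Gap}(\overline{T}_1)$ gives
\[
\mathcal{E}_{\overline{T}_1}(g, g) \;\geq\; \textbf{Gap}(\overline{T}_1) \cdot \Var_{\pi_1}(g).
\]
Averaging both sides over $\lambda_{-h}$ under $\overline{\pi}$, which factorises as in \eqref{eq0}, the left side becomes $\mathcal{E}_{\overline{T}_1^{\mathrm{ext}}}(f, f)$ and the right side becomes $\textbf{Gap}(\overline{T}_1) \cdot \mathcal{E}_\Pi(f, f)$, using the identification $\mathcal{E}_\Pi(f,f) = \EE_{\lambda_{-h}}[\Var_{\pi_1}(f(\cdot, \lambda_{-h}))]$ for the fresh-draw kernel. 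Substituting back and matching the resulting convex combination of $\mathcal{E}_{\overline{Q}}$ and $\mathcal{E}_\Pi$ against $\mathcal{E}_{P_1}$ yields $\mathcal{E}_{\overline{P}_{pt}}(f,f) \geq \tfrac{1}{4}\textbf{Gap}(\overline{T}_1) \cdot \mathcal{E}_{P_1}(f,f)$; dividing by $\Var_{\overline{\pi}}(f)$ and taking the infimum over $f$ completes the proof.

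The main obstacle is the bookkeeping of the constant $1/4$: one must track carefully how the two $1/2$-holding probabilities (added to $T$ and to $Q$ for nonnegative definiteness) and the overall convex combination that forms $P_{pt}$ scale the coefficients of $\overline{Q}$ and $\overline{T}_1^{\mathrm{ext}}$ inside $\overline{P}_{pt}$ relative to the coefficients of $\overline{Q}$ and $\Pi$ inside $P_1$. The conceptual ingredients --- linearity of Dirichlet forms under mixtures of reversible kernels sharing a stationary distribution, the fibre-wise variational characterisation of the spectral gap, and the identification of the fresh-draw Dirichlet form with a conditional variance --- are all standard.
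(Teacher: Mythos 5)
The paper does not prove this lemma at all; it is cited verbatim as Equation~(17) of \cite{conditions}, and the text explicitly says ``We borrow the following inequality from \cite{conditions}.'' So there is no in-paper proof to compare against. What you have done is supply a proof of a statement the authors treat as a black box, and your argument is conceptually sound: the three ingredients you isolate --- linearity of Dirichlet forms under convex combinations of kernels reversible with respect to the same stationary measure, the fibre-wise Poincar\'e inequality applied to the hot coordinate, and the identification $\mathcal{E}_{\Pi}(f,f) = \EE_{\lambda_{-h}}[\Var(f(\cdot,\lambda_{-h}))]$ for the fresh-draw kernel --- are exactly the standard decomposition machinery that proves such bounds, and this is the route taken in \cite{conditions}. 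The product form of $\overline{\pi}$ in Eq.~\eqref{eq0} is what makes the averaging step clean (the marginal of the hot coordinate is the same under every conditioning), and you correctly invoke it.

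Two remarks on what is left open. First, you never pin down the constant~$4$, and doing so requires committing to the precise definition of $P_{pt}$ as a convex combination (rather than a product) of $T$ and $Q$, which this paper never states explicitly and only inherits from \cite{conditions}; with $P_{pt}=\tfrac12(T+Q)$ and the built-in $\tfrac12$-laziness of $T$ and $Q$, the coefficient on $\mathcal{E}_{\overline{T}_1^{\mathrm{ext}}}$ inside $\mathcal{E}_{\overline{P}_{pt}}$ is $\tfrac{1}{4(L+1)}$ while the coefficient on $\mathcal{E}_{\Pi}$ inside $\mathcal{E}_{P_1}$ is $\tfrac{1}{2(L+1)}$, and together with $\textbf{Gap}(\overline{T}_1)\le 1$ (which holds because $\overline{T}_1$ inherits $\tfrac12$-laziness from $T_1$) this closes with a factor strictly better than $\tfrac14$, so the bookkeeping is not a genuine risk. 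Second, a small notational wrinkle you inherit from the paper: the hot level is indexed~$0$ here, so the section $g$ lives in $L^2$ of the projected $\pi_0$, not $\pi_1$; the subscript~$1$ in $\overline{T}_1$ is a holdover from the $1$-indexed convention of \cite{conditions}, and should be read as the hottest-level projected kernel. Neither point affects the validity of the argument.
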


We now bound $\textbf{Gap}(P_1)$ by comparison with another $\overline{\pi}$-reversible chain using the canonical path method. Define the transition matrix $P_2$ which chooses $i$ uniformly from $\{0, \ldots, L\}$  and then draws $\lambda_i$ according to the distribution
$\pi_i[A_k], k = 1, \ldots, m$.
Intuitively, $P_2$ moves easily between different modes, and thus has a large spectral gap. By comparing $P_1$ and $P_2$, we obtain a lower bound on the spectral gap of $\overline{P}_{pt}$.

Before presenting the path, we first present a procedure that recursively constructs a path $\gamma_{\lambda, (i, j)\lambda}$ from $\lambda$ to $(i, j)\lambda$, where $i < j$; i.e. swaps the samples at the $i^{\text{th}}$ and $j^{\text{th}}$ level. We show that, for any intermediate state in the path, it differs from $\lambda$ in at most $O(\log L)$ levels. The procedure is formally stated in Algorithm~\ref{algo2}.

\begin{algorithm}
\SetAlgoLined
\nllabel{algo2}
Procedure name: \textbf{Swap}($i$, $j$)\;
 \eIf{$j - i \leq 1$}{
   Swap the states at the $i^{\text{th}}$ and $j^{\text{th}}$ level\;
 }
 {
 \textbf{Swap}($i$, $\lfloor \frac{i + j}{2}\rfloor$)\;
 \textbf{Swap}($\lfloor \frac{i + j}{2} \rfloor$, $j$)\;
 \textbf{Swap}($i$, $\lfloor \frac{i +j}{2}\rfloor$)\;
 }
 \caption{Procedure for swapping the states at the $i^{\text{th}}$ and $j^{\text{th}}$ levels.}
\end{algorithm}

\begin{lemma}
For any state $\tau$ in $\gamma_{\lambda, (i, j)\lambda}$, it differs from $\lambda$ in at most $O(\log L)$ levels.
\label{lem1}
\end{lemma}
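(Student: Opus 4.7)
The plan is to exploit the recursive structure of Algorithm~\ref{algo2} and show that the intermediate state always differs from $\lambda$ by a product of only $O(\log L)$ transpositions.

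First, I would observe that the recursion tree of \textbf{Swap}($i$, $j$) has depth $O(\log L)$: since each recursive call to \textbf{Swap}($p$, $q$) with $q - p > 1$ invokes sub-calls of distance at most $\lceil (q - p)/2 \rceil$, the depth is at most $\lceil \log_2 L \rceil + 1$. Second, by structural induction on Algorithm~\ref{algo2}, the net effect of a fully completed \textbf{Swap}($p$, $q$) call on the state is exactly the transposition of positions $p$ and $q$ (the first and third sub-calls apply $(p, \lfloor (p + q)/2 \rfloor)$ and sandwich the middle sub-call $(\lfloor (p + q)/2 \rfloor, q)$ so as to compose to $(p, q)$). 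Therefore, at any intermediate moment during the top-level call \textbf{Swap}($i$, $j$), the current state equals $\lambda$ composed with the net transpositions of precisely the \emph{maximal} completed sub-calls in the recursion tree (completed sub-calls not properly contained in any larger completed sub-call). These are exactly the completed sibling sub-calls of the active \textbf{Swap} calls appearing on the current call stack, together with the current base-case swap if it has been performed.

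Third, at each of the $O(\log L)$ levels of the call stack the active \textbf{Swap} has at most $2$ completed siblings out of its $3$ sub-calls, so the total number of maximal completed sub-calls contributing to the current state is $O(\log L)$. Each contributes a transposition affecting only $2$ positions, giving a total of $O(\log L)$ positions at which the current state can differ from $\lambda$, which is precisely the claim.

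The main technical obstacle is precisely justifying the invariant in the second step: that the current state equals $\lambda$ composed with exactly the net transpositions of the maximal completed sub-calls. This requires an induction on the execution of Algorithm~\ref{algo2}, exploiting the observation that every fully completed \textbf{Swap} call collapses its internal atomic swaps into a single net transposition, so that only the ``unfinished'' frontier of the recursion tree (the siblings of the current call stack) contributes. Once this structural invariant is in place, the counting argument is immediate.
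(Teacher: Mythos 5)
Your proof is correct and takes a genuinely different route from the paper's. The paper sets up a recurrence: letting $F(\ell)$ be the maximum number of out-of-place levels over all intermediate states of \textbf{Swap}($x$, $x+\ell$), it tracks the state across the three sub-calls to obtain $F(\ell) \leq \max\left\{2 + F\left(\left\lceil \ell/2\right\rceil\right),\; 3 + F\left(\left\lfloor \ell/2\right\rfloor\right)\right\}$ and unrolls this to get $F(\ell) = O(\log \ell)$. You instead prove a structural invariant on the execution trace: at any moment the current state is $\lambda$ composed with the net transpositions of the maximal completed sub-calls, which are precisely the already-finished earlier siblings of the frames on the active call stack (plus possibly the current atomic swap); since the stack has depth $O(\log L)$ and each frame has at most two finished siblings, at most $O(\log L)$ transpositions, and hence $O(\log L)$ positions, can differ from $\lambda$. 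Your key lemma --- that a completed \textbf{Swap}($p$, $q$) has net effect exactly the transposition $(p,q)$, because $(p,m)(m,q)(p,m) = (p,q)$ --- is what makes this collapse to the recursion frontier work. The paper's recurrence is somewhat more mechanical and avoids having to formalize ``maximal completed sub-call'' or to argue by induction on the execution; your approach is more conceptual and explains \emph{why} the bound is logarithmic, namely that only the frontier of the recursion tree, not its entire history, contributes to the displacement from $\lambda$. The constants are of the same order (roughly $3\log_2 L$ from the paper's recurrence, roughly $4\log_2 L$ from your counting), both well within the claimed $O(\log L)$.
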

\begin{proof}
Let $F(\ell)$ denote the maximum number of levels that any intermediate state can differ from the original state during the procedure that swaps the $x^{\text{th}}$ level and the $(x + \ell)^{\text{th}}$ level, where $x + \ell \leq L$. When we call \textbf{Swap}($x$, $x + \lfloor\frac{\ell}{2}\rfloor$) for the first time, any intermediate state can differ from $\lambda$ in at most $F(\lfloor\frac{\ell}{2}\rfloor)$ levels; when we call  \textbf{Swap}($x + \lfloor\frac{\ell}{2}\rfloor$, $x + \ell$), any intermediate state can differ from $\lambda$ in at most $F(\lceil\frac{\ell}{2}\rceil) + 2$ levels; when we call \textbf{Swap}($x$, $x + \lfloor\frac{\ell}{2}\rfloor$) for the second time, any intermediate state can differ from $\lambda$ in at most $F(\lfloor\frac{\ell}{2}\rfloor) + 3$ levels. Thus, we have
$$F(\ell) \leq \max\left\{2 + F\left(\left\lceil\frac{\ell}{2}\right\rceil\right), 3 + F\left(\left\lfloor\frac{\ell}{2}\right\rfloor\right)\right\},$$
which gives $F(\ell) \leq O(\log \ell)$. Therefore, we have that any state in $\gamma_{\lambda, (i, j)\lambda}$ can differ from $\lambda$ in at most $O(\log (j - i)) \leq O(\log L)$ levels.
\end{proof}

%Let $k^*$ be the value of $k$ that maximizes $\pi_m(k)$. The entire procedure for swapping the $0^{\text{th}}$ chain and the $L^{\text{th}}$ chain is given as follows:

Next, fixing some $i \in \{0, \ldots, L\}$ and $k \in [m]$, we present a path $\gamma_{\lambda, \lambda_{[i, k]}}$ between $\lambda$ and $\lambda_{[i, k]}$. Let $k^*$ be the value of $k$ that maximizes $\pi_L(A_k)$. The procedure is formally stated in Algorithm~\ref{algo3}.

\begin{algorithm}
\SetAlgoLined
\nllabel{algo3}
 Change the value at level 0, $\lambda_0$, to $k^*$\;
 
 Call \textbf{Swap}(0, $i$)\;
 
 Change the value at level 0 to $k$\;
 
 Call \textbf{Swap}(0, $i$)\;
 
 Change the value at level 0, $k^*$, to $\lambda_0$\; 
 \caption{Procedure for constructing a path between $\lambda$ and $\lambda_{[i, k]}$.}
\end{algorithm}

To bound $\textbf{Gap}(P_1)$, we first present an upper bound for $\frac{\overline{\pi}(\lambda)P_2(\lambda, \lambda_{[i, k]})}{\overline{\pi}(\tau) P_1(\tau, \widetilde{\tau})}$ for any states $\tau, \widetilde{\tau} \in \gamma_{\lambda, \lambda_{[i, k]}}$.

\begin{lemma}
\label{lem114514}
$\frac{\overline{\pi}(\lambda)P_2(\lambda, \lambda_{[i, k]})}{\overline{\pi}(\tau) P_1(\tau, \widetilde{\tau})} \leq \frac{4m}{\phi^{2}B^{O(\log L)}}$.
\end{lemma}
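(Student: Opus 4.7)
The plan is to bound the ratio edge-by-edge along the canonical path $\gamma_{\lambda,\lambda_{[i,k]}}$ from Algorithm~\ref{algo3}, splitting its edges into (i) the three level-0 change edges (lines 1, 3, 5) and (ii) the adjacent-level swap edges produced by the two calls to \textbf{Swap}$(0,i)$. For a swap edge, equation~(\ref{eq555}) together with the $\tfrac{1}{2L}$ factor in $\overline{Q}$ and the $\tfrac{1}{2}$ holding in $P_1$ gives $P_1(\tau,\widetilde{\tau})\ge \phi^2/(4L)$, and reversibility upgrades this to $\overline{\pi}(\tau)P_1(\tau,\widetilde{\tau})\ge (\phi^2/(4L))\max\{\overline{\pi}(\tau),\overline{\pi}(\widetilde{\tau})\}$; for a change edge, $P_1(\tau,\widetilde{\tau})=\pi_0(A_{\widetilde{\tau}_0})/(2(L+1))$ directly. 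The numerator is $\overline{\pi}(\lambda)P_2(\lambda,\lambda_{[i,k]})=\overline{\pi}(\lambda)\pi_i(A_k)/(L+1)$.

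The crux of the argument is to show $\max\{\overline{\pi}(\tau),\overline{\pi}(\widetilde{\tau})\}\ge (B^{O(\log L)}\pi_i(A_k)/m)\,\overline{\pi}(\lambda)$, with the $\pi_i(A_k)$ (when present) cancelling the one in the numerator. The main tool is the inequality, extracted from the definition of $B$, that any sub-product $\prod_{\ell=j+1}^{j'}\min\{1,\pi_{\ell-1}(A_v)/\pi_\ell(A_v)\}$ is at least $B$, since each factor lies in $(0,1]$; this yields $\pi_j(A_v)/\pi_{j'}(A_v)\ge B$ for every partition index $v$ and every $j<j'$. This is combined with three ingredients: (a) the defining property of $k^*$, which gives $\pi_L(A_{k^*})\ge 1/m$ and hence $\pi_j(A_{k^*})\ge B/m$ for every $j$; (b) the cancellation $\pi_0(A_k)\ge B\pi_i(A_k)$, used to absorb a potentially small $\pi_0(A_k)$ arising during the second \textbf{Swap} call (whose starting state has $k$ at level 0); (c) Lemma~\ref{lem1}, which guarantees that $\tau$ is a permutation of the state at the start of its \textbf{Swap} call and differs from that start in only $|S|=O(\log L)$ positions. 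A careful accounting then expresses $\overline{\pi}(\tau)/\overline{\pi}(\text{start})$ as a product of $|S|$ same-value-different-level factors to which (a) and (b) can be applied, giving the $B^{O(\log L)}$ factor; a separate comparison of $\overline{\pi}(\text{start})$ with $\overline{\pi}(\lambda)$ contributes $B/m$ or $B^2\pi_i(A_k)/m$.

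Assembling the pieces, each swap edge gives a ratio at most $(4/\phi^2)\,\pi_i(A_k)\,\overline{\pi}(\lambda)/\overline{\pi}(\tau)\le 4m/(\phi^2 B^{O(\log L)})$, while the three change edges yield $O(m/B^{O(1)})$ by direct algebra using the same substitutions, which is subsumed since $\phi,B\in(0,1]$. The main obstacle is the crux step: one must organize the permutation structure produced by the recursive \textbf{Swap} so that every one of the $O(\log L)$ differing positions contributes a controlled factor, invoking the $k^*$-specialness and the cancellation $\pi_0(A_k)\ge B\pi_i(A_k)$ at exactly the positions whose natural direction of motion would otherwise leave the factor unbounded below. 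Once this bookkeeping is in place, the remaining calculations are routine algebra.
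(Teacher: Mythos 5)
Your proposal matches the paper's proof in its essential structure: the same split into swap edges versus the three level-$0$ change edges, the same use of Eq~(\ref{eq555}) together with the $\tfrac{1}{2L}$ and $\tfrac{1}{2}$ factors to get $P_1(\tau,\widetilde{\tau})\ge \phi^2/(4L)$ on swap edges, the same sub-product inequality $\pi_j(A_v)/\pi_{j'}(A_v)\ge B$ for $j<j'$, and the same $k^*$-specialness $\pi_j(A_{k^*})\ge B/m$. The only real difference is that you lower-bound $\overline{\pi}(\tau)$ against $\overline{\pi}(\lambda)\pi_i(A_k)/m$ while the paper uses $\min\{\overline{\pi}(\lambda),\overline{\pi}(\lambda_{[i,k]})\}/m$ (Eq~(\ref{orz})); since $\overline{\pi}(\lambda)\pi_i(A_k)\le\min\{\overline{\pi}(\lambda),\overline{\pi}(\lambda_{[i,k]})\}$, the two reference quantities give the same final bound, and yours arguably streamlines the second-\textbf{Swap} bookkeeping by making the cancellation with $\pi_i(A_k)$ explicit. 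The ``crux'' you flag as the main obstacle is exactly the monotonicity observation the paper states as ``by construction'': during \textbf{Swap}$(0,i)$, every displaced token other than the one that started at level~$0$ sits at a \emph{strictly lower} level than its starting level, so all but one of the same-value-different-level factors are automatically $\ge B$, with the single exceptional factor handled by (a) or (b).
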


\begin{proof} 
Observe that
\begin{equation}
\label{awa!}
\overline{\pi}(\lambda)P_2(\lambda, \lambda_{[i, k]}) = \frac{\pi_i(A_k)}{L + 1}\prod_{j = 0}^L \pi_j(A_{\lambda_j}) = \frac{1}{L + 1}\min\{\overline{\pi}(\lambda), \overline{\pi}(\lambda_{[i, k]})\}\max\{\pi_i(A_{\lambda_i}), \pi_i(A_k)\}.
\end{equation}

For any intermediate state $\tau$ in $\gamma_{\lambda, \lambda_{[i, k]}}$, we obtain a lower bound for $\overline{\pi}(\tau)$ in terms of $\min\{\overline{\pi}(\lambda), \overline{\pi}(\lambda_{[i, k]})\}$. Here, we present the following lemma.

\begin{lemma}
For any state $\tau$ in $\gamma_{\lambda, \lambda_{[i, k]}}$, it differs from $\lambda$ and $\lambda_{[i, k]}$ in at most $O(\log L)$ levels.
\end{lemma}
\begin{proof}
When we call \textbf{Swap}(0, $i$) for the first time, any intermediate state can differ from $\lambda$ in at most $O(\log L)$ levels by Lemma~\ref{lem1}. After the first \textbf{Swap}(0, $i$) procedure finishes, the current state differs from $\lambda$ only in the $0^{\text{th}}$ and the $i^{\text{th}}$ levels. Thus, when we call \textbf{Swap}(0, $i$) for the second time, we still have any intermediate state can differ from $\lambda$ in at most $O(\log L)$ levels. Therefore, any state in $\gamma_{\lambda, \lambda_{[i, k]}}$ differs from $\lambda$ in at most $O(\log L)$ levels. Since $\lambda$ and $\lambda_{[i, k]}$ differs in only one level, we also have any state in $\gamma_{\lambda, \lambda_{[i, k]}}$ differs from $\lambda_{[i, k]}$ in at most $O(\log L)$ levels.
\end{proof}

Fix some state $\tau$ from the path, and suppose it differs from $\lambda$ in $n$ levels, where $n = O(\log L)$. 
This actually implies that there are at most $n$ samples that are not in their original levels. 

Suppose all the samples that are not in their original levels are $\{\lambda_{\ell_1}, \ldots, \lambda_{\ell_{n - 1}}, k^*\}$. For $\lambda_{\ell_i}$, suppose its original level is $\ell_i$ and its current level is $\ell_i^*$, and for $k^*$, suppose its current level is $\ell_n^*$. One observation here is that we must have $\ell_i^* < \ell_i$ for all $i \in \{1, \ldots, n - 1\}$ by construction. Thus, we have
\begin{equation*}
\label{wtf}
\frac{\overline{\pi}(\tau)}{\overline{\pi}(\lambda)} = \frac{\pi_{\ell_n^*}(k^*)}{\pi_0(\lambda_0)}\prod_{i = 1}^{n - 1} \frac{\pi_{\ell_i^*}(\lambda_{\ell_i})}{\pi_{\ell_i}(\lambda_{\ell_i})} \geq \frac{B^n}{m} = \frac{B^{O(\log L)}}{m}.
\end{equation*}

Similarly, we have $\frac{\overline{\pi}(\tau)}{\overline{\pi}(\lambda_{[i, k]})} \geq \frac{B^{O(\log L)}}{m}$. Thus, we have
\begin{equation}
\label{orz}
\frac{\overline{\pi}(\tau)}{\min\left(\overline{\pi}(\lambda), \overline{\pi}(\lambda_{[i, k]})\right)} \geq \frac{B^{O(\log L)}}{m}.
\end{equation}

Now, consider some edge $(\tau, \widetilde{\tau})$ on the path $\gamma_{\lambda, \lambda_{[i, k]}}$. If $\widetilde{\tau} = (j, j + 1)\tau$ for some $j$, since the probability of proposing swap according to $Q$ is $\frac{1}{2L}$, we have
    \begin{align}
       \frac{\overline{\pi}(\lambda)P_2(\lambda, \lambda_{[i, k]})}{\overline{\pi}(\tau) P_1(\tau, \widetilde{\tau})} &\leq \frac{2\overline{\pi}(\lambda)P_2(\lambda, \lambda_{[i, k]})}{\overline{\pi}(\tau)\overline{Q}(\tau, \widetilde{\tau})}\leq \frac{4(L + 1)\overline{\pi}(\lambda)P_2(\lambda, \lambda_{[i, k]})}{\overline{\pi}(\tau)\phi^2}\\
       &= \frac{4\min\{\overline{\pi}(\lambda), \overline{\pi}(\lambda_{[i, k]})\}\max\{\pi_i(A_k), \pi_i(A_{\lambda_i})\}}{\overline{\pi}(\tau)\phi^2} \leq \frac{4m}{\phi^2 B^{O(\log L)}}.
       \label{eq-2}
    \end{align}
where the first step is because $P_1$ transitions according to $\overline{Q}$ with probability $\frac{1}{2}$, the second step is by Eq~(\ref{eq555}), the third step is by Eq~(\ref{awa!}), and the fourth step is by Eq~(\ref{orz}). 

If $\widetilde{\tau} = \tau_{[0, k_0]}$ for some $k_0$, then we must be in line 1, 3, or 5 in Algorithm~\ref{algo3}. If we are in line 1, we have $k_0 = k^*$ and $\tau = \lambda$, so
\begin{align*}
   \frac{\overline{\pi}(\lambda)P_2(\lambda, \lambda_{[i, k]})}{\overline{\pi}(\tau) P_1(\tau, \widetilde{\tau})} = \frac{2\overline{\pi}(\lambda)\pi_i(A_k)}{\overline{\pi}(\lambda)\pi_0(A_{k^*})}  \leq \frac{2}{\pi_0(A_{k^*})} \leq \frac{2m}{B},
\end{align*}
where the first step is by definition of $P_1$ and $P_2$, the second step is because $\pi_i(A_k) \leq 1$, and the third step is because $\pi_0(A_{k^*}) \geq \pi_L(A_{k^*})B\geq \frac{B}{m}$. If we are in line 3, we have $k_0 = k$ and $\tau_0 = \lambda_i$, so
    \begin{align}
        \frac{\overline{\pi}(\lambda)P_2(\lambda, \lambda_{[i, k]})}{\overline{\pi}(\tau) P_1(\tau, \widetilde{\tau})} &= \frac{2\overline{\pi}(\lambda)\pi_i(A_k)}{\overline{\pi}(\tau)\pi_0(A_{k})} = \frac{2\min\{\overline{\pi}(\lambda), \overline{\pi}(\lambda_{[i, k]})\}\max\{\pi_i(A_k), \pi_i(A_{\lambda_i})\}}{\min\{\overline{\pi}(\tau), \overline{\pi}(\widetilde{\tau})\}\max\{\pi_0(A_k), \pi_0(A_{\lambda_i})\}}\\
        &\leq \frac{2}{B}\cdot \frac{\min\{\overline{\pi}(\lambda), \overline{\pi}(\lambda_{[i, k]})\}}{\min\{\overline{\pi}(\tau), \overline{\pi}(\widetilde{\tau})\}} \leq \frac{2m}{B^{O(\log L)}},
        \label{eq-1}
    \end{align}
where the second step is by Eq~(\ref{awa!}), the third step is because $\pi_i(A_k) \leq \frac{\pi_0(A_k)}{B} \leq \frac{\max\{\pi_0(A_k), \pi_0(A_{\lambda_i})\}}{B}$ and $\pi_i(A_{\lambda_i}) \leq \frac{\pi_0(A_{\lambda_i})}{B} \leq \frac{\max\{\pi_0(A_k), \pi_0(A_{\lambda_i})\}}{B}$, and the fourth step is by Eq~(\ref{orz}). If we are in line 5, we have $\widetilde{\tau} = \lambda_{[i, k]}$ and $\tau_0 = k^*$, so
$$   \frac{\overline{\pi}(\lambda)P_2(\lambda, \lambda_{[i, k]})}{\overline{\pi}(\tau) P_1(\tau, \widetilde{\tau})} = \frac{2\overline{\pi}(\lambda_{[i, k]})\pi_i(A_{\lambda_i})}{\overline{\pi}(\lambda_{[i, k]})\pi_0(A_{k^*})} \leq \frac{2m}{B}$$
where the first step is because both $P_1$ and $P_2$ are reversible with respect to $\overline{\pi}$. Synthesizing all cases discussed above finishes the proof for Lemma~\ref{lem114514}.
\end{proof}

Next, we present the following lemmas, which provide a bound on the length and congestion of the path.
\begin{lemma}
\label{lem4}
$|\gamma_{\lambda, \lambda_{[i, k]}}| \leq O(L^{\log_23})$.
\end{lemma}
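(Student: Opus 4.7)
The plan is to analyze the length of the path by solving a recurrence induced by the recursive structure of Algorithm~\ref{algo2}, and then plug the result into the three-line/two-\textbf{Swap} structure of Algorithm~\ref{algo3}.

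First, I would let $T(\ell)$ denote the maximum number of edges in the path produced by $\textbf{Swap}(x, x+\ell)$. By inspection of Algorithm~\ref{algo2}, the base case is $T(1) = 1$ (a single swap edge between adjacent levels), and for $\ell \geq 2$ the procedure makes one call on an interval of length $\lfloor \ell/2 \rfloor$, one call on an interval of length $\lceil \ell/2 \rceil$, and then another call on an interval of length $\lfloor \ell/2 \rfloor$. Hence
$$T(\ell) \leq 2\,T\!\left(\left\lfloor \tfrac{\ell}{2}\right\rfloor\right) + T\!\left(\left\lceil \tfrac{\ell}{2}\right\rceil\right) \leq 3\,T\!\left(\left\lceil \tfrac{\ell}{2}\right\rceil\right).$$

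Next, unrolling this recurrence $\lceil \log_2 \ell \rceil$ times yields $T(\ell) \leq 3^{\lceil \log_2 \ell\rceil} \leq 3 \cdot \ell^{\log_2 3}$, by the standard Master-theorem-style argument (or simply by induction on $\ell$). In particular, each of the two invocations of $\textbf{Swap}(0, i)$ inside Algorithm~\ref{algo3} contributes at most $3\,i^{\log_2 3} \leq 3\,L^{\log_2 3}$ edges to the path.

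Finally, Algorithm~\ref{algo3} consists of exactly three single-edge updates at level $0$ (lines 1, 3, and 5) plus two calls to $\textbf{Swap}(0, i)$ (lines 2 and 4). Summing these contributions gives
$$|\gamma_{\lambda, \lambda_{[i,k]}}| \leq 3 + 2 \cdot 3 \cdot L^{\log_2 3} = O(L^{\log_2 3}),$$
which is the claimed bound. There is no real obstacle here beyond being careful with the ceilings in the recurrence; the only thing worth double-checking is that the base case $j - i \leq 1$ in Algorithm~\ref{algo2} indeed contributes a single edge in the canonical path, which is immediate from the definition.
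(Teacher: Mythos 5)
Your proposal is correct and follows essentially the same route as the paper: both set up the recurrence $F(\ell) = 2F(\lfloor \ell/2\rfloor) + F(\lceil \ell/2\rceil)$ from the three recursive calls in Algorithm~\ref{algo2}, solve it to get $O(\ell^{\log_2 3})$, and then add the $3$ fixed edges and $2$ calls to $\textbf{Swap}(0,i)$ from Algorithm~\ref{algo3}. The only cosmetic difference is that you bound by $3T(\lceil\ell/2\rceil)$ and unroll, whereas the paper bounds $F(\ell)\le F(2^t)$ for $2^{t-1}\le\ell<2^t$ via monotonicity; both implicitly use monotonicity of the path length in the interval size, which is clear from the recursive structure.
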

\begin{proof}
Let $F(\ell)$ denote the length of the path produced by Algorithm~\ref{algo2} that swaps the $x^{\text{th}}$ level and the $(x + \ell)^{\text{th}}$ level, where $x + \ell \leq L$. Then, we have
$$F(\ell) = 2F\left(\left\lfloor\frac{\ell}{2}\right\rfloor\right) + F\left(\left\lceil \frac{\ell}{2}\right\rceil\right),$$
which gives 
$$F(2^t) = 3^t, \:F(\ell) \leq F(2^t) = 3^t \leq (2\ell)^{\log_23} \quad\forall t \in \mathbb{N}, 2^{t - 1} \leq \ell < 2^t.$$
i.e. $F(\ell) = O(\ell^{\log_23})$. Thus, $|\gamma_{\lambda, \lambda_{[i, k]}}| = 3 + 2F(i) =  O(i^{\log_23}) \leq O(L^{\log_23})$.
\end{proof}

\begin{lemma}
\label{lem5}
For each given edge $(\tau, \widetilde{\tau})$, there are at most $O(m^2(L + 1)^{1 + \log_23})$ paths going through this edge.
\end{lemma}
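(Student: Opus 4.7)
The plan is to decompose the path $\gamma_{\lambda,\lambda_{[i,k]}}$ produced by Algorithm~\ref{algo3} into five segments corresponding to its five lines, and for a fixed edge $(\tau,\widetilde\tau)\in E_{P_1}$ count separately the triples $(\lambda,i,k)$ that place the edge in each segment, then sum. Lines 1, 3, and 5 each contribute a single level-$0$-change edge, while lines 2 and 4 contribute runs of adjacent-swap edges coming from a \textbf{Swap}$(0,i)$ call. The two \textbf{Swap} segments will give the dominant contribution, and the other three segments will be strictly smaller.

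For segments 2 and 4, the key observation is that with $i$ fixed, Algorithm~\ref{algo2} performs a deterministic sequence of $F(i)\le O((L+1)^{\log_2 3})$ reversible adjacent swaps (Lemma~\ref{lem4}). Hence if the edge appears at step $s$ of this sequence, then applying the inverse of the first $s-1$ swaps to $\tau$ uniquely recovers the starting state of that \textbf{Swap}$(0,i)$ call. In segment~2 that starting state is $\lambda_{[0,k^*]}$, which pins down every $\lambda_j$ with $j\neq 0$ and leaves $(\lambda_0,k)$ free, giving at most $m^2$ triples per $(i,s)$. In segment~4 the starting state is $(k,\lambda_1,\ldots,\lambda_{i-1},k^*,\lambda_{i+1},\ldots,\lambda_L)$, which pins down $k$ together with all $\lambda_j$ for $j\neq 0,i$ and leaves $(\lambda_0,\lambda_i)$ free, again at most $m^2$ triples per $(i,s)$. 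Summing over the $L+1$ values of $i$ and the $O((L+1)^{\log_2 3})$ values of $s$ yields $O(m^2(L+1)^{1+\log_2 3})$ triples from each of these two segments.

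Segments 1, 3, and 5 each produce a level-$0$-change edge whose explicit form directly constrains $\lambda$: $\tau=\lambda,\widetilde\tau=\lambda_{[0,k^*]}$ in line~1; $\tau_0=\lambda_i$, $\tau_i=k^*$, $\widetilde\tau=\tau_{[0,k]}$ in line~3; and $\tau_0=k^*,\widetilde\tau=\lambda_{[i,k]}$ in line~5. In each case a direct inspection leaves at most one coordinate of $\lambda$ free and lets one of $i$ or $k$ range over at most $L+1$ or $m$ values, contributing only $O(m(L+1))$ triples per segment, which is dominated. Summing all five contributions gives the claimed $O(m^2(L+1)^{1+\log_2 3})$ bound. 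The main technical step will be the inversion argument for segments 2 and 4: one must verify that Algorithm~\ref{algo2} is fully deterministic given $i$, so that $(\tau,i,s)$ unambiguously determines the starting state of the corresponding \textbf{Swap}$(0,i)$ call; after that the count reduces to routine bookkeeping of how many coordinates of $\lambda$ (and whether $k$) are left unconstrained.
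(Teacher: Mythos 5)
Your proof is correct and follows essentially the same approach as the paper: exploit the fact that the path is a deterministic function of $(\lambda, i, k)$, and for each edge and each step $s$ along it, invert the preceding moves to pin down all but $O(1)$ coordinates of $(\lambda, k)$, then sum over $i$ and $s$. Your segment-by-segment accounting (freeing $(\lambda_0, k)$ in segment 2, and $(\lambda_0, \lambda_i)$ with $k$ pinned in segment 4) is in fact slightly more careful than the paper's, which loosely asserts that for fixed $i, k$ the edge determines all of $\lambda_1, \ldots, \lambda_L$ -- not quite accurate during the second \textbf{Swap} call, where $\lambda_i$ is also undetermined but $k$ is forced -- though both tallies land on the same $O(m^2(L+1)^{1+\log_2 3})$.
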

\begin{proof}
We first make the following observation. With fixed $i \in \{0, \ldots, L\}, k \in [m]$ and $\lambda, \widetilde{\lambda}$ being two starting states such that $\lambda_{i_0} \neq \widetilde{\lambda}_{i_0}$ for some $i_0 \in [L]$, let $\gamma_{\lambda, \lambda_{[i, k]}}^s$ and $\gamma_{\widetilde{\lambda}, \widetilde{\lambda}_{[i, k]}}^s$ be the $s^{\text{th}}$ edge in each path. Note that $|\gamma_{\lambda, \lambda_{[i, k]}}| = |\gamma_{\widetilde{\lambda}, \widetilde{\lambda}_{[i, k]}}|$, since the length of the path depends only on $i$ by Algorithm~\ref{algo2} and~\ref{algo3}. Then, the main observation is that $\gamma_{\lambda, \lambda_{[i, k]}}^s \neq \gamma_{\widetilde{\lambda}, \widetilde{\lambda}_{[i, k]}}^s$ for each $1 \leq s \leq |\gamma_{\lambda, \lambda_{[i, k]}}|$ because $\lambda$ and $\widetilde{\lambda}$ experience exactly the same procedure in the first $s$ steps to get to $\lambda_{[i, k]}$ and $\widetilde{\lambda}_{[i, k]}$. Thus,
\begin{align*}
    \sum_{\gamma_{\lambda, \lambda_{[i, k]}}\ni (\tau, \widetilde{\tau})} 1 = \sum_{s = 1}^{|\gamma_{\lambda, \lambda_{[i, k]}}|}\sum_{\gamma_{\lambda, \lambda_{[i, k]}}^s =  (\tau, \widetilde{\tau})}1 \leq \sum_{s = 1}^{|\gamma_{\lambda, \lambda_{[i, k]}}|}m \leq O\left(mL^{\log_23}\right),
\end{align*}
i.e. for fixed $i, k$, each edge belongs to at most $O\left(mL^{\log_23}\right)$ paths. Note that the second step is because the condition $\gamma_{\lambda, \lambda_{[i, k]}}^s =  (\tau, \widetilde{\tau})$ uniquely determines $\lambda_{i_0}$ for all $i_0 \in [L]$, and we have the sample on the $0^{\text{th}}$ level unfixed. Since there are $L + 1$ choices for $i$ and $m$ choices for $k$, we can conclude that each edge belongs to at most $O\left(m^2(L + 1)^{1 + \log_23}\right)$ paths.
\end{proof}

Now we go back to prove Theorem~\ref{main}.

\begin{proof}[Proof of Theorem \ref{main}]
By Lemmas~\ref{lem4} and~\ref{lem5}, we have
\begin{equation}
\label{eq999}
\sum_{\gamma_{\lambda, \lambda_{[i, k]}}\ni (\tau, \widetilde{\tau})}|\gamma_{\lambda, \lambda_{[i, k]}}| \leq O\left(m^2(L + 1)^{1 + 2\log_23}\right).
\end{equation}

Since $P_1$ and $P_2$ have the same stationary distribution $\overline{\pi}$, applying Theorem~\ref{canonicalpath}, combined with Lemma~\ref{lem114514} and Eq~(\ref{eq999}), gives
\begin{align*}
\textbf{Gap}(P_2) &\leq \max_{(\tau, \widetilde{\tau}) \in E_{P_1}}\left\{\frac{1}{\pi_{P_1}(\tau)P_1(\tau, \widetilde{\tau})}\sum_{\gamma_{\lambda,\lambda_{[i, k]}}\ni (\tau, \widetilde{\tau})}|\gamma_{\lambda,\lambda_{[i, k]}}|\pi_{P_2}(\lambda)P_2(\lambda, \lambda_{[i, k]})\right\}\textbf{Gap}(P_1)\\
&\leq \max_{(\tau, \widetilde{\tau}) \in E_{P_1}}\left\{\frac{\pi_{P_2}(\lambda)P_2(\lambda, \lambda_{[i, k]})}{\pi_{P_1}(\tau)P_1(\tau, \widetilde{\tau})}\right\}\max_{(\tau, \widetilde{\tau}) \in E_{P_1}}\left\{\sum_{\gamma_{\lambda,\lambda_{[i, k]}}\ni (\tau, \widetilde{\tau})}|\gamma_{\lambda,\lambda_{[i, k]}}|\right\}\textbf{Gap}(P_1)\\
&\leq \frac{O(m^3(L + 1)^{1 + 2\log_23})}{\phi^{2}B^{O(\log L)}}\textbf{Gap}(P_1).
\end{align*}

By Lemma~\ref{lem111}, we get
\begin{equation}
\label{eq1}
    \textbf{Gap}(\overline{P}_{pt}) \geq \frac{\textbf{Gap}(P_1)\textbf{Gap}(\overline{T}_1)}{4} \geq \frac{\phi^2 B^{O(\log L)}}{O(m^3(L + 1)^{2 + 2\log_23})}\textbf{Gap}(\overline{T}_1).
\end{equation}
%Using Equation 14, 15 and 23 from \cite{conditions} proves Theorem~\ref{main}.
Here, we borrow the following inequalities from \cite{conditions}.

\begin{lemma}
\label{lem6}
\normalfont{(Equation (14) in \cite{conditions})}
$\normalfont{\textbf{Gap}(P_{pt})} \geq \frac{1}{2}\textbf{Gap}(\overline{P}_{pt})\min_{\lambda \in [m]^{L + 1}}\textbf{Gap}(P_{pt}|_{\mathcal{X}_{\lambda}}).$
\end{lemma}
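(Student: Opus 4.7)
The plan is to prove Lemma~\ref{lem6} as an instance of the state-space decomposition theorem of Caracciolo--Pelissetto--Sokal and Madras--Randall, specialized to the partition $\{\mathcal{X}_\lambda\}_{\lambda \in [m]^{L + 1}}$ of $\mathcal{X}_{pt}$. Working from the variational characterization of the spectral gap, the goal is to upper bound $\Var_{\pi_{pt}}(f)$ for any $f \in L_2(\pi_{pt})$ by a constant times $\mathcal{E}_{P_{pt}}(f,f)$, where the constant involves only $\textbf{Gap}(\overline{P}_{pt})$ and $\min_\lambda \textbf{Gap}(P_{pt}|_{\mathcal{X}_\lambda})$.

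First I would apply the law of total variance with respect to the partition:
$$\Var_{\pi_{pt}}(f) = \sum_\lambda \overline{\pi}(\lambda)\,\Var_{\pi_{pt}|_{\mathcal{X}_\lambda}}(f) + \Var_{\overline{\pi}}(\overline{f}),$$
where $\overline{f}(\lambda) := \mathbb{E}_{\pi_{pt}|_{\mathcal{X}_\lambda}}[f]$. Bounding each summand by its Poincar\'e inequality yields $\Var_{\pi_{pt}|_{\mathcal{X}_\lambda}}(f) \leq \mathcal{E}_{P_{pt}|_{\mathcal{X}_\lambda}}(f,f)/\textbf{Gap}(P_{pt}|_{\mathcal{X}_\lambda})$ and $\Var_{\overline{\pi}}(\overline{f}) \leq \mathcal{E}_{\overline{P}_{pt}}(\overline{f},\overline{f})/\textbf{Gap}(\overline{P}_{pt})$. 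The next step is a pair of Dirichlet-form comparisons back to $\mathcal{E}_{P_{pt}}(f,f)$. The within-piece side is straightforward: the bounce-back definition of $P_{pt}|_{\mathcal{X}_\lambda}$ forces its within-piece transition masses to agree with those of $P_{pt}$, so $\sum_\lambda \overline{\pi}(\lambda)\mathcal{E}_{P_{pt}|_{\mathcal{X}_\lambda}}(f,f)$ telescopes to exactly the ``diagonal-block'' part of $\mathcal{E}_{P_{pt}}(f,f)$.

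The main obstacle is the cross-piece comparison $\mathcal{E}_{\overline{P}_{pt}}(\overline{f},\overline{f}) \leq C\,\mathcal{E}_{P_{pt}}(f,f)$ for a universal constant $C$. Crossings between $\mathcal{X}_{\lambda_1}$ and $\mathcal{X}_{\lambda_2}$ under $P_{pt}$ come only from the swap moves of $Q$ and live on a thin slice of $\mathcal{X}_{\lambda_1}\times\mathcal{X}_{\lambda_2}$, so the naive Jensen bound $(\overline{f}(\lambda_1)-\overline{f}(\lambda_2))^2 \leq \mathbb{E}[(f(\theta)-f(\widetilde{\theta}))^2]$ for \emph{independent} $\theta \sim \pi_{pt}|_{\mathcal{X}_{\lambda_1}}$, $\widetilde{\theta} \sim \pi_{pt}|_{\mathcal{X}_{\lambda_2}}$ does not match the crossing transition density. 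The standard fix is a triangle-inequality trick: decompose
$$\overline{f}(\lambda_1)-\overline{f}(\lambda_2) = \bigl(\overline{f}(\lambda_1)-f(\theta)\bigr)+\bigl(f(\theta)-f(\widetilde{\theta})\bigr)+\bigl(f(\widetilde{\theta})-\overline{f}(\lambda_2)\bigr),$$
apply $(a+b+c)^2 \leq 3(a^2+b^2+c^2)$, and integrate against the actual crossing measure of $P_{pt}$. The middle term integrates to the off-diagonal portion of $\mathcal{E}_{P_{pt}}(f,f)$, while the outer two are within-piece variances that can be absorbed back into the already-controlled first term of the variance decomposition. Carefully tracking constants through this absorption step yields $\Var_{\pi_{pt}}(f) \leq \frac{2}{\textbf{Gap}(\overline{P}_{pt})\min_\lambda \textbf{Gap}(P_{pt}|_{\mathcal{X}_\lambda})}\mathcal{E}_{P_{pt}}(f,f)$, which rearranges to the statement of Lemma~\ref{lem6}.
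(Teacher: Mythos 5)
The paper does not prove Lemma~\ref{lem6}: it is imported verbatim as Equation~(14) from \cite{conditions}, which in turn is an instance of the Caracciolo--Pelissetto--Sokal / Madras--Randall / Martin--Randall state-space decomposition theorem applied to the partition $\{\mathcal{X}_\lambda\}_{\lambda\in[m]^{L+1}}$. So you are not reproducing the paper's argument (there is none), but you have correctly identified the underlying theorem and sketched a proof of it, and the high-level structure --- law of total variance over the partition, a Poincar\'e inequality on each side, the within-block identity $\sum_\lambda\overline{\pi}(\lambda)\mathcal{E}_{P_{pt}|_{\mathcal{X}_\lambda}}(f,f)$ equalling the diagonal block of $\mathcal{E}_{P_{pt}}(f,f)$, and a three-term triangle inequality against the actual crossing measure --- is sound, including the often-overlooked observation that the naive Jensen bound against the product measure does not match the real crossing kernel.

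The one genuine gap is the final constant claim. With the inequality $(a+b+c)^2\le 3(a^2+b^2+c^2)$ and the bounds $T_1,T_3\le V_1:=\sum_\lambda\overline{\pi}(\lambda)\Var_{\pi_{pt}|_{\mathcal{X}_\lambda}}(f)$ and $T_2\le 2\mathcal{E}_{P_{pt}}^{\text{off}}(f,f)$, you obtain $\mathcal{E}_{\overline{P}_{pt}}(\overline{f},\overline{f})\le 3V_1+3\mathcal{E}_{P_{pt}}^{\text{off}}(f,f)$, and after absorbing $V_1\le\mathcal{E}_{P_{pt}}^{\text{diag}}(f,f)/g$ (with $g=\min_\lambda\textbf{Gap}(P_{pt}|_{\mathcal{X}_\lambda})$, $\overline{g}=\textbf{Gap}(\overline{P}_{pt})$, both $\le 1$) you get
$\Var_{\pi_{pt}}(f)\le\tfrac{(\overline{g}+3)}{g\overline{g}}\mathcal{E}^{\text{diag}}+\tfrac{3}{\overline{g}}\mathcal{E}^{\text{off}}\le\tfrac{4}{g\overline{g}}\mathcal{E}_{P_{pt}}(f,f)$,
i.e.\ $\textbf{Gap}(P_{pt})\ge\tfrac{1}{4}g\overline{g}$, not the $\tfrac{1}{2}$ constant asserted. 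One can check that no reweighting of the triangle inequality fixes this: requiring $(a+b+c)^2\le s(a^2+c^2)+\beta b^2$ for all reals forces $s\ge 2$ (take $b=0$, $a=c$), which already yields a constant worse than $2$ after absorption. Recovering the sharp $\tfrac12$ in \cite{conditions} requires the more delicate argument of the original decomposition papers rather than a direct absorb-into-$V_1$ step. This has no bearing on the present paper, which only uses Lemma~\ref{lem6} inside a big-$O$, but the final sentence of your proposal overstates what the computation you describe actually delivers.
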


\begin{lemma}
\label{lem7}
\normalfont{(Equation (15) in \cite{conditions})}
$\normalfont{\textbf{Gap}(P_{pt}|_{\mathcal{X}_{\lambda}})} \geq \frac{1}{8(L + 1)}\min_{i ,k}\textbf{Gap}(T_i|_{A_k}) \quad\forall \lambda \in [m]^{L + 1}.$
\end{lemma}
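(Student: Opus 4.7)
The plan is to exploit the fact that, once restricted to $\mathcal{X}_\lambda$, the parallel tempering chain behaves like a product chain on $\prod_{i=0}^L A_{\lambda_i}$. Concretely, swap moves in $Q$ either become self-loops (when $\lambda_{i-1} \neq \lambda_{i}$, since a swap between $A_{\lambda_{i-1}}$ and $A_{\lambda_i}$ leaves $\mathcal{X}_\lambda$ and is folded into the holding probability by the definition of restriction) or are harmless to the partition structure (when $\lambda_{i-1} = \lambda_{i}$). Thus the only moves of $P_{pt}|_{\mathcal{X}_\lambda}$ that can genuinely change the state are the update moves, which at level $i$ execute $T_i|_{A_{\lambda_i}}$ with fixed probability weight of order $\frac{1}{L+1}$.

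Second, I would invoke the standard Dirichlet form decomposition for product-type chains. Since $\pi|_{\mathcal{X}_\lambda}$ factors as the product measure $\prod_i \pi_i|_{A_{\lambda_i}}$ and the restricted kernel is (up to holding) a weighted sum $\sum_i p_i P^{(i)}$ of single-coordinate kernels $P^{(i)} = T_i|_{A_{\lambda_i}}$ with each $p_i$ of order $\frac{1}{L+1}$, the classical inequality for coordinate-wise mixtures on product spaces gives
$$\textbf{Gap}\left(\sum_i p_i P^{(i)}\right) \geq \min_i p_i \, \textbf{Gap}(P^{(i)}).$$
This yields a bound of order $\frac{1}{L+1} \min_i \textbf{Gap}(T_i|_{A_{\lambda_i}})$, and dropping to $\min_{i,k}\textbf{Gap}(T_i|_{A_k}) \leq \min_i \textbf{Gap}(T_i|_{A_{\lambda_i}})$ makes the estimate uniform in $\lambda$, delivering the claimed form.

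The main obstacle will be pinning down the precise constant $\frac{1}{8}$. This requires carefully unpacking the definition of $P_{pt}$ (whether it is the mixture or the composition of $T$ and $Q$), accounting for the two $\frac{1}{2}$-holding factors already baked into $T$ and $Q$ for nonnegative definiteness, and verifying that the Dirichlet form comparison between the full restricted kernel and the clean coordinate-wise mixture loses nothing beyond a factor of $4$. I do not anticipate any conceptual difficulty beyond this bookkeeping; the essential content of the lemma is the product-chain observation, which is insensitive to the exact weights.
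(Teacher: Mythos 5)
The paper does not actually prove this lemma; it is imported verbatim as Equation (15) of \cite{conditions}, so there is no internal proof to compare against. Your argument is nonetheless the standard (and, to my knowledge, the original source's) route: on $\mathcal{X}_\lambda=\prod_i A_{\lambda_i}$ the stationary measure is the product $\prod_i \pi_i|_{A_{\lambda_i}}$, the swap components either fold into holding or only add nonnegative Dirichlet form, and tensorization of variance gives $\textbf{Gap}\bigl(\sum_i p_i P^{(i)}\bigr)\ge \min_i p_i\,\textbf{Gap}(P^{(i)})$ for coordinate-wise kernels, which yields the claim up to constants. The only point requiring real care is the one you flag: the paper never pins down whether $P_{pt}$ is the mixture or the composition of $T$ and $Q$, and the clean "drop the swap part of the Dirichlet form" step is immediate only for the mixture; for a composition one needs an extra comparison step, which is presumably where the spare factor in $\frac{1}{8(L+1)}$ is spent. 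With that caveat your proposal is sound.
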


Combining Lemma~\ref{lem6},~\ref{lem7}, and Eq~(\ref{eq1}) gives
\begin{align*}
\normalfont{\textbf{Gap}}(P_{pt}) \geq\frac{1}{16(L + 1)}\textbf{Gap}(\overline{P}_{pt})\min_{i ,k}\textbf{Gap}(T_i|_{A_k}) \geq \frac{\phi^2 B^{O(\log L)}}{O(m^3(L + 1)^{3 + 2\log_23})}\textbf{Gap}(\overline{T}_1)\min_{i ,k}\textbf{Gap}(T_i|_{A_k}).
\end{align*}
which finishes the proof for Theorem~\ref{main}.
\end{proof}

%depends superpolynomial on B...?
\section{Hypothetical Upper Bound on Spectral Gap}
In this section, we complement our result with a hypothetical upper bound on spectral gap for parallel tempering. Theorem 5.2 in~\cite{conditions} states that the spectral gap of a chain is upper bounded by the spectral gap of the projected chain. \cite{torpid} discusses sufficient conditions for torpid mixing of parallel tempering, and points out that mixtures of gaussians with unequal variances can mix slowly; however, their slow mixing essentially stems from the fact that the bottleneck ratio in their instance is very small, and their spectral gap can still be upper bounded by $O(B^{O(1)})$. In the sequel, we construct a hypothetical instance and specify the weights of different modes at different temperatures. We only focus on transitions between different modes, but not inside a mode. In this instance, we prove an upper bound of $O\left(B^{O(\log L)}\right)$ on the spectral gap under the assumption that only swappings between adjacent temperature levels are allowed and there's no movement between the modes on the same temperature level, which shows that our lower bound in Theorem~\ref{main} is tight in some sense. Note that we can place the modes far enough from each other, so that there's practically no movement between different modes on the same level. We leave for future works for a more natural example that can lead to a similar upper bound.

\begin{theorem}
\label{upper bound}There exists an instance, in which only swapping between adjacent temperature levels are allowed and there's no movement between the modes on the same level, such that
$$\normalfont{\textbf{Gap}}(P_{pt}) < O\left(B^{O(\log L)}\right).$$
\end{theorem}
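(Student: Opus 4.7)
The plan is to construct an explicit instance and upper bound the spectral gap of its projected chain $\overline{P}_{pt}$ via a Cheeger-style bottleneck argument, then invoke Theorem 5.2 of \cite{conditions} (which asserts $\textbf{Gap}(P_{pt}) \leq \textbf{Gap}(\overline{P}_{pt})$) to transfer the bound to $\textbf{Gap}(P_{pt})$.

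Concretely, I would take $L+1 = 2^d$ temperature levels and $m = 2$ modes $A_1, A_2$, placed physically far apart so that, under the hypothesis that the $T_i$ induce no movement between modes, the projected chain on $[2]^{L+1}$ evolves only by adjacent swaps and has stationary distribution $\overline{\pi}(\lambda) = \prod_i \pi_i(A_{\lambda_i})$. I would then assign the weights $\pi_i(A_1), \pi_i(A_2)$ in a dyadic hierarchical pattern mirroring the recursive structure of Algorithm~\ref{algo2}: at each dyadic scale $s = 1, \ldots, d$, the weights on the two halves of each scale-$s$ block should differ by a controlled multiplicative factor, calibrated so that the bottleneck ratio $B$ admits a clean closed form and any adjacent-swap flow between ``opposite'' configurations must cross $\Omega(\log L)$ nested barriers, each contributing a factor $B^{\Omega(1)}$.

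Next, I would identify a bottleneck set $S \subset [2]^{L+1}$ -- for instance a hierarchical balance condition such as ``the pattern of $1$-particles agrees with a fixed reference configuration on at least half of the levels'' -- and estimate its conductance
$$\Phi(S) = \frac{\sum_{\tau \in S,\; \widetilde{\tau} \in S^c} \overline{\pi}(\tau)\,\overline{Q}(\tau, \widetilde{\tau})}{\min(\overline{\pi}(S), \overline{\pi}(S^c))}.$$
By the hierarchical weight assignment, every adjacent swap that carries mass across the cut suppresses the swap acceptance probability by $B^{\Omega(\log L)}$, so $\Phi(S) \leq O(B^{O(\log L)})$. Combined with the Cheeger inequality $\textbf{Gap}(\overline{P}_{pt}) \leq 2\Phi(S)$, this yields $\textbf{Gap}(\overline{P}_{pt}) \leq O(B^{O(\log L)})$, and hence the same bound for $\textbf{Gap}(P_{pt})$.

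The main obstacle is calibrating the hierarchical weights so that $B$ and $\Phi(S)$ have matching orders of magnitude. A single-scale or single-particle construction typically yields $\Phi(S)$ of order $B$ rather than $B^{\log L}$, since one bottleneck crossing already suffices to traverse the cut; getting the $\log L$ exponent requires that the $d$ dyadic scales genuinely contribute $d$ independent factors of $B^{\Omega(1)}$, and this pins down the weight profile rather tightly. Once the instance is specified, verifying that $\overline{\pi}(S), \overline{\pi}(S^c) = \Theta(1)$, estimating $\Phi(S)$, and applying the Cheeger inequality are routine.
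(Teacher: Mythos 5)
Your plan to prove the upper bound via Cheeger's inequality on the projected chain is the right high-level strategy, and it is what the paper does. However, the specific construction and mechanism you propose have two genuine problems that I do not think can be repaired without essentially reinventing the paper's argument.

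First, you take $m=2$ modes, whereas the paper takes $m=L+1$. This choice matters because the crucial combinatorial fact underlying the paper's conductance bound (Lemma~\ref{lem8}) is that, under adjacent transpositions, moving a \emph{distinguished} sample from level $0$ to level $L$ forces some intermediate state to differ from the dominant state in $\Omega(\log L)$ levels. This is a statement about adjacent transpositions acting on a permutation of $L+1$ distinct labels; it is precisely what gives the $\log L$ exponent. When $m=2$ the labels repeat, and the obstruction simply disappears: starting from the dominant configuration $\lambda = 1^a 2^b$ you can carry a single $1$ to the top level while every intermediate configuration differs from $\lambda$ in only two positions (one where the $1$ came from and one where it now sits). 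So any cut of the form ``reachable with at most $\Theta(\log L)$ misplaced levels'' degenerates to the whole component, and there is no bottleneck set of polynomially large mass left to put on the other side, which is exactly the role of Lemma~\ref{lem15}.

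Second, and more fundamentally, the mechanism you propose — that ``every adjacent swap that carries mass across the cut suppresses the swap acceptance probability by $B^{\Omega(\log L)}$'' — cannot work. A single swap between levels $i-1$ and $i$ has acceptance probability determined by the local ratio $\frac{\pi_{i-1}(A_{\lambda_i})\pi_i(A_{\lambda_{i-1}})}{\pi_{i-1}(A_{\lambda_{i-1}})\pi_i(A_{\lambda_i})}$, while $B$ is a product of the local ratios $\min\{1,\pi_{i-1}(A_k)/\pi_i(A_k)\}$ over \emph{all} $L$ levels. Whatever the weight profile, the local acceptance at one edge is at worst comparable to a constant power of the worst local ratio, which is bounded below by $B$ itself (or close to it), not $B^{\Omega(\log L)}$. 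Hierarchical weight patterns cannot make a single local move exponentially worse than the global bottleneck product without simultaneously collapsing $B$. What the paper does instead is keep every individual swap acceptance and the bottleneck ratio $B$ merely polynomial in $1/(L+1)$ (Lemma~\ref{sb}), and put the smallness into the \emph{stationary mass of the boundary states}: with $m=L+1$ and the chosen weights, every state with $n$ misplaced levels has mass roughly $(L+1)^{-2n}$ (Lemma~\ref{lem11}), the boundary of $S$ consists of states with $\Theta(\log L)$ misplaced levels, and hence the numerator of the conductance is $(1/(L+1))^{\Theta(\log L)} = B^{\Theta(\log L)}$ (Lemma~\ref{lem13}). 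This ``tiny boundary mass'' mechanism, enabled by the permutation structure of $m=L+1$ modes and by the adjacent-transposition lemma, is the idea your proposal is missing, and your honest observation that the ``single-scale'' construction gives only $B$ is pointing at exactly this gap.
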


Before presenting the proof, we first provide a sketch of our construction. We consider a distribution with $m$ modes and run a parallel tempering chain with $L + 1$ levels on this distribution, where $m = L + 1$. We construct the distribution in such a way that on the $i^{\text{th}}$ level of the chain, the density of the $(i + 1)^{\text{st}}$ mode dominates the density of all the other modes, while the density of the other modes are on the same order of magnitude. Then, we divide the state space into two parts by considering how many samples each state differs from the starting state, show that the probability of transitioning between these two parts is small while the density of each part is large, and use Cheeger's inequality to derive an upper bound on spectral gap. The relative magnitude of modes is illustrated in Figure~\ref{construction}.

\begin{figure}[!h]
\centering
\begin{tikzpicture}[yscale = 0.75, xscale = 0.75]

\node  at (-7.5, 0.1) {$0^{\text{th}}$ level};
\draw [very thick, ->] (-7.5, -1) -- (-7.5, -5);
\node at (-9, -2.4) {Decreasing};
\node at (-9, -3) {Temperature};
\node  at (-7.5, -5.8) {$L^{\text{th}}$ level};

\draw [very thick, color3] (1, 0) circle (12pt);
\node  at (-1, 0) {$\cdots\cdots$};
\draw [very thick, color3] (-3, 0) circle (12pt);
\draw [very thick, color1] (-5, 0) circle (24pt);

\draw [very thick, color3] (1, -2) circle (12pt);
\node at (-1, -2) {$\cdots\cdots$};
\draw [very thick, color1] (-3, -2) circle (24pt);
\draw [very thick, color3] (-5, -2) circle (12pt);

\node  at (-1.3, -3.61) {$\ddots$};
\node  at (-0.75, -4.05) {$\ddots$};
\node  at (-5, -4.05) {$\vdots$};
\node  at (-3, -4.05) {$\vdots$};
\node  at (1, -4.05) {$\vdots$};
\node  at (-5, -3.5) {$\vdots$};
\node  at (-3, -3.5) {$\vdots$};
\node  at (1, -3.5) {$\vdots$};

\draw [very thick, color1] (1, -5.8) circle (24pt);
\node at (-1, -5.8) {$\cdots\cdots$};
\draw [very thick, color3] (-3, -5.8) circle (12pt);
\draw [very thick, color3] (-5, -5.8) circle (12pt);

\end{tikzpicture}
\caption{Illustration of Relative Magnitude of Modes}
\label{construction}
\end{figure}
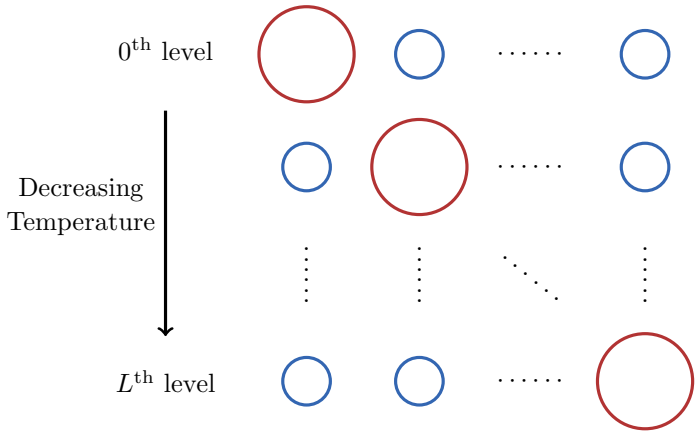

To prove Theorem~\ref{upper bound}, we first present the following lemma.

\begin{lemma}
\label{lem8}
If we want to transform a state from $\lambda$ to $\widetilde{\lambda}$, where $\widetilde{\lambda}_L = \lambda_0$, by only swapping the samples between adjacent temperature levels, there exists some intermediate state $\tau$ which differs from $\lambda$ in $\Omega(\log L)$ levels.
\end{lemma}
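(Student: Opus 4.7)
The plan is to prove the lemma by induction on $L$, tracking the trajectory of the ``$\lambda_0$-token'' (the sample originally sitting at level 0). Let $f(L)$ denote the minimum, over all adjacent-swap sequences transforming $\lambda$ into some $\widetilde{\lambda}$ with $\widetilde{\lambda}_L = \lambda_0$, of the maximum number of levels at which an intermediate state differs from $\lambda$. The goal is to show $f(L) = \Omega(\log L)$, which would exactly complement the $O(\log L)$ upper bound furnished by Lemma~\ref{lem1} and Algorithm~\ref{algo2}.

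Each adjacent swap changes the level of the $\lambda_0$-token by at most one, so its position $k_t$ is a walk on $\{0,1,\ldots,L\}$ with $k_0 = 0$ and $k_T = L$. In particular, there is a first time $t^*$ at which $k_{t^*} = \lceil L/2\rceil$, and throughout $[0, t^*]$ the $\lambda_0$-token remains in $\{0,1,\ldots,\lceil L/2\rceil\}$. Restricting the sequence to $[0, t^*]$ yields a valid adjacent-swap sequence for the reduced problem of moving the $\lambda_0$-token from level 0 to level $\lceil L/2\rceil$, so by the induction hypothesis some state $\tau^{(s)}$ with $s \leq t^*$ satisfies $|\{i : \tau^{(s)}_i \neq \lambda_i\}| \geq f(\lceil L/2\rceil)$.

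The central obstacle is closing the recursion as $f(L) \geq f(\lceil L/2\rceil) + \Omega(1)$, since a naive application of the induction hypothesis only yields $f(L) \geq f(\lceil L/2\rceil)$, which is too weak to iterate to a $\log L$ bound. To extract the extra additive constant I would exploit the second half of the sequence: after time $t^*$, the $\lambda_0$-token still must travel from level $\lceil L/2\rceil$ to level $L$, and by a symmetric argument some time $s' \geq t^*$ again witnesses significant displacement. Carefully accounting for which tokens have been displaced during the first half versus during the second half---and noting that those displaced in the first half cannot all have returned to their home levels by time $t^*$ if the $\lambda_0$-token is to continue ascending past $\lceil L/2\rceil$---should yield at least one additional non-fixed level beyond the inductive count.

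An alternative route that sidesteps the additive-constant bookkeeping is to prove the dual isoperimetric statement directly: within the subgraph of the swap graph consisting of states $\tau$ with $|\{i : \tau_i \neq \lambda_i\}| \leq d$, the set of positions reachable by the $\lambda_0$-token starting from $\lambda$ is contained in $\{0, 1, \ldots, g(d)\}$ for some function satisfying $g(d) \leq 2\,g(d-c) + O(1)$, hence $g(d) = 2^{O(d)}$. Then reaching level $L$ forces $d = \Omega(\log L)$. Verifying the base cases and the recurrence for $g$ is a finite case analysis on small values of $d$ (essentially confirming that transpositions, $3$-cycles, etc.\ can only propagate the $\lambda_0$-token a bounded distance), and this is where the main technical work lies.
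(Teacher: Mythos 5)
Your high-level plan (track the $\lambda_0$-token and recurse by halving its journey to accumulate a $\log L$ factor) is the same shape as the paper's argument, but you have explicitly left open the crux, and the heuristic you offer to close it does not hold. The induction hypothesis applied to the window $[0,t^*]$ only produces \emph{some} time $s\le t^*$ at which $f(\lceil L/2\rceil)$ levels are out of place; at that moment $s$ nothing need be displaced in the upper half, and at the later moment $t^*$ the lower-half displacements may have largely healed (at time $t^*$ only the $\lambda_0$-token and the original occupant of $\lceil L/2\rceil$ are forced to be out of place). There is no single intermediate state at which your two sources of displacement are guaranteed to coexist, and the claim that tokens displaced during the first half ``cannot all have returned to their home levels by $t^*$'' is false: the $\lambda_0$-token can perfectly well continue ascending from a state where only two levels are out of place. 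There is also a bookkeeping issue: you never restrict the displacement count to a sub-interval of levels, so the $f(\lceil L/2\rceil)$ you get from the induction hypothesis is not localized to the lower half and cannot simply be added to a contribution from the upper half. Your alternative isoperimetric route has the same gap: the recurrence $g(d)\le 2g(d-c)+O(1)$ is asserted, not derived, and the ``finite case analysis'' you defer is exactly the hard part.

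The paper closes this gap with a genuinely nontrivial device. It works with the localized counts $h_\tau(i,j)$, passes to a bi-infinite chain $\lambda\in[m]^{\mathbb{Z}}$ to kill boundary effects, and inducts from $L$ to $2L$. The key step (Lemma~\ref{lem9}) examines the \emph{last} intermediate state $\tau^1$ with $h_{\tau^1}(k+L+1,k+2L)\ge\lfloor\log_2 L\rfloor$ and shows, by applying the induction hypothesis again, that the designated token must already be in the upper half at that time, for otherwise one could produce a still-later witness, contradicting maximality of $\tau^1$. A conservation argument then shows some sample $s$ originally from the upper half has been pushed down past level $k$; running the reversed induction to force $\lfloor\log_2 L\rfloor$ displacements in the lower half, the displaced $s$ contributes one extra out-of-place level in the upper half, giving $h_{\tau^4}(k+1,k+2L)\ge\lfloor\log_2 L\rfloor+1$. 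That ``last-witness plus conservation'' argument is precisely the missing $+1$ your proposal needs.
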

\begin{proof}
For any state $\tau$, define $h_\tau(i, j)$ to be the number of levels of $\tau$ between the $i^{\text{th}}$ level and the $j^{\text{th}}$ level, inclusive, that differ from $\lambda$; i.e., $h_\tau(i, j) = |\{\ell: i \leq \ell \leq j, \tau_\ell \neq \lambda_\ell\}|$. To prove this lemma, we prove a stronger statement: assume that there are infinitely many levels (like the $-1^{\text{th}}$ level, $(L + 1)^{\text{st}}$ level, etc., all of which can be swapped to), and $\lambda$ has infinite length; i.e. $\lambda \in [m]^{\mathbb{Z}}$. We apply induction to show that for all $k \in \mathbb{Z}$, if we want to transform a state from $\lambda$ to $\widetilde{\lambda}$, where $\widetilde{\lambda}_{k + L} = \lambda_{i_k}$ for some $i_k \leq k$, then there exists some intermediate state $\tau$ where $h_\tau(k + 1, k + L) \geq \lfloor \log_2 L\rfloor$.

The base case $L = 1$ clearly holds. Suppose the claim holds for $L$, we prove that it also holds for $2L$. Suppose we want to transform a state from $\lambda$ to $\widetilde{\lambda}$, where $\widetilde{\lambda}_{k + 2L} = \lambda_{i_k}$ for some $i_k \leq k$. We first present the following lemma.

\begin{lemma}
\label{lem9}
We only need to consider the case where, in the process of transforming a state from $\lambda$ to $\widetilde{\lambda}$, where $\widetilde{\lambda}_{k + 2L} = \lambda_{i_k}$ for some $i_k \leq k$, there exists some intermediate state $\tau^1$ where $h_{\tau^1}(k + L + 1, k + 2L) \geq \lfloor \log_2L\rfloor$, and $\lambda_{i_k}$ is swapped to somewhere between the $(k + L + 1)^{\text{st}}$ level and the $(k + 2L)^{\text{th}}$ level; i.e., $\tau^1_\ell = \lambda_{i_k}$ for some $k + L + 1 \leq \ell \leq k + 2L$.
\end{lemma}
\begin{proof}
Clearly, there exists an intermediate state $\tau$ such that $h_\tau(k + L + 1, k + 2L) \geq \lfloor \log_2L\rfloor$. We show that, in addition to this, we can either find such an intermediate state $\tau^1$ such that $\lambda_{i_k}$ is swapped to somewhere between the $(k + L + 1)^{\text{st}}$ level and the $(k + 2L)^{\text{th}}$ level or conclude the induction. We take $\tau^1$ to be the last intermediate state $\tau$ where $h_\tau(k + L + 1, k + 2L) \geq \lfloor \log_2L\rfloor$. If $\lambda_{i_k}$ is not between the $(k + L + 1)^{\text{st}}$ level and the $(k + 2L)^{\text{th}}$ level at this point, then we argue that we only need to consider the case where it is at some level $i_k' \leq k$. Actually, we can observe that if we have $h_{\tau^1}(k + 1, k + L) > 0$, then, since $h_{\tau^1}(k + L + 1, k + 2L) \geq \lfloor \log_2L\rfloor$, we have $h_{\tau^1}(k + 1, k + 2L) \geq \lfloor\log_2L\rfloor + 1 = \lfloor \log_22L\rfloor$, which will conclude the induction. Thus, we may assume that $h_{\tau^1}(k + 1, k + L) = 0$. Since $\lambda_{i_k}$ hasn't reached the $(k + 2L)^{\text{th}}$ level and it's not between the $(k + L + 1)^{\text{st}}$ level and the $(k + 2L)^{\text{th}}$ level by assumption, this implies $i_k' \leq k$. Then, by induction hypothesis, we would encounter a later intermediate state $\tau^2$ where $h_{\tau^2}(k + 1, k + L) \geq \lfloor \log_2L\rfloor$. Suppose $\lambda_{i_k}$ is at level $i_k''$ in $\tau^2$, where $i_k'' \leq k + L$. Here, we only need to consider the case where $h_{\tau^2}(k + L + 1, k + 2L) = 0$; otherwise the induction similarly concludes. 
%also don't need to consider >k+2L because otherwise there is a shorter path
Again by induction hypothesis, in order to swap $\lambda_{i_k}$ from level $i_k''$ up to the $(k + 2L)^{\text{th}}$ level, there exists a later intermediate state $\tau^3$ with $h_{\tau^3}(k + L + 1, k + 2L) \geq \lfloor \log_2L\rfloor$, which contradicts the fact that $\tau^1$ is the last intermediate state $\tau$ where $h_\tau(k + L + 1, k + 2L) \geq \lfloor \log_2L\rfloor$. This concludes the proof of Lemma~\ref{lem9}.
\end{proof}

We say a sample $s$ moves across levels $(k + 1) \sim (k + 2L)$ if it moves from level $i_s > k + 2L$ to level $i_s' \leq k + 1$ or from level $i_s < k + 1$ to level $i_s' \geq k + 2L$. One observation here is that we can assume that $\lambda_{i_k}$ is the first sample that moves across levels $(k + 1) \sim (k + 2L)$. This is because we can otherwise take the first sample that moves across levels $(k + 1) \sim (k + 2L)$ and apply the same argument, possibly reversing the levels. We proceed with our proof under this assumption.

 By Lemma~\ref{lem9}, we only need to consider the case where there exists some intermediate state $\tau^1$ such that $h_{\tau^1}(k + L + 1, k + 2L) \geq \lfloor \log_2L\rfloor$ and $\lambda_{i_k}$ is between the $(k + L + 1)^{\text{st}}$ level and the $(k + 2L)^{\text{th}}$ level in $\tau^1$. If $h_{\tau^1}(k + 1, k + L) > 0$, then the induction concludes. Thus, we may assume that $h_{\tau^1}(k + 1, k + L) = 0$. Since $\lambda_{i_k}$ is between the $(k + L + 1)^{\text{st}}$ level and the $(k + 2L)^{\text{th}}$ level in $\tau^1$, and it was originally at level $i_k \leq k$, there exists some sample $s$ which was originally at level $i_s > k + L$ that is now at level $i_s' \leq k + L$. Since $h_{\tau^1}(k + 1, k + L) = 0$, we have $i_s' \leq k$. Also, as we have assumed that $\lambda_{i_k}$ is the first sample that moves across levels $(k + 1) \sim (k + 2L)$, we also have $i_s \leq k + 2L$; i.e., $s$ was originally between the $(k + L + 1)^{\text{st}}$ level and the $(k + 2L)^{\text{th}}$ level. Since $s$ ultimately reaches level $i_s' \leq k$, there must be a point when it reaches the $(k + 1)^{\text{st}}$ level. Then, by Lemma~\ref{lem9} with the levels reversed, we only need to consider the case where there exists some intermediate state $\tau^4$ where $h_{\tau^4}(k + 1, k + L) \geq \lfloor \log_2L\rfloor$, and $s$ is between the $(k + 1)^{\text{st}}$ level and the $(k + L)^{\text{th}}$ level. At this point, since $s$ is not at its original level, we have $h_{\tau^4}(k + 1, k + 2L) \geq \lfloor \log_2L\rfloor + 1 = \lfloor \log_2 2L\rfloor$, again concluding the induction.

By now, the induction is complete. To complete the proof of Lemma~\ref{lem8}, we observe that if we want to move a sample from the $k^{\text{th}}$ level to the $(k + L)^{\text{th}}$ level, and define
$$f(L) = \min_{\text{all paths }\gamma}\max_{\tau\in\gamma} h_\tau(k + 1, k + L),$$
we clearly have $f(L + 1) \geq f(L)$, since on one hand, during the process of swapping $\lambda_k$ to the $(k + L + 1)^{\text{st}}$ level, we will always pass through the $(k + L)^{\text{th}}$ level, and on the other hand, for any intermediate state $\tau$, we always have $h_\tau(k + 1, k + L) \leq h_\tau(k + 1, k + L + 1)$ by definition. Note that $f$ is still defined under the assumption that there are infinitely many levels. Combined with our inductive step, which proceeds from $L$ to $2L$, we can conclude that $f(L) \geq \lfloor \log_2L\rfloor$ for all $L \in \mathbb{N}$.

Going back to our original claim where there are only $(L + 1)$ levels, since a path in a chain with finite number of levels is always a valid path in a chain with infinitely many levels, our original claim holds true.
\end{proof}

\paragraph{Construction.} We consider the scenario where $L + 1 = m$ and $\beta_i = \frac{i + 1}{L + 1}$. Pick $m$ well-separated points $\{x_k\}_{k = 1}^m$ which will be used as the centers of the modes. Suppose we want to sample from $\pi$; we present a construction for $\pi^{1 / (L + 1)}$ as follows:
$$\pi^{1/(L + 1)} = \frac{1}{Z_0}\left(w_k\pi_k + \sum_{k = 1}^m \sum_{r = 0}^{L} w_{k r}\pi_{kr}\right),$$
where for each fixed $k$, $\pi_k$ and $\pi_{kr}$ for each $r \in \{0, \ldots, L\}$ are unnormalized uniform distributions around $x_k$ with unit density in a space with volume $V_k$ and $V_{kr}$, which are disjoint with each other, and $Z_0$ is the normalizing constant. Assume that $\{x_k\}_{k = 1}^m$ are far enough from each other so that we can approximate $\pi^{(i + 1) / (L + 1)}$ with $\frac{1}{Z_i}\left(w_k^{i + 1}\pi_k^{i + 1} + \sum_{k = 1}^m \sum_{r = 0}^{L} w_{k r}^{i + 1}\pi_{kr}^{i + 1}\right)$, where $Z_i$ is its normalizing constant; i.e., cross terms can be safely ignored. Consider the partition $\mathcal{A} = \{A_k: k = 1, \ldots, m\}$ of $\mathcal{X}$ such that $A_k$ contains the mode with center $x_k$. Let $\gamma = (L + 1)^3$ and
$$w_{k} = \gamma^{2k}, \quad V_{k} = \gamma^{-k^2 + 1},$$
$$w_{kr} = \gamma^{2r + 1}, \quad V_{kr} = \gamma^{-r^2 - r} \quad\forall r \in \{0, \ldots, L\}.$$

Let $\overline{\pi}$ be as defined in Eq~(\ref{eq0}). We show that this instance yields the desired upper bound. Let $\pi_i$ be the normalized form of $\pi^{(i + 1)/(L + 1)}$, and $Z_i$ be the normalizing constant. We first present the following lemma, which provides bounds on the density of each mode.

%$\lambda = (1, \ldots, m)$ be the initial state on the projected space
\begin{lemma}
\label{lem11}
$\pi_i(A_{i + 1}) > 1 - \frac{1}{L + 1} \text{ and } \frac{1}{2(L + 1)^3} < \pi_i(A_k) < \frac{1}{(L + 1)^2} \quad\forall i \in \{0, \ldots, L\} \text{ and } k \neq i + 1$.
\end{lemma}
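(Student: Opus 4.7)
The plan is to compute, for each level $i$ and each mode $k$, the unnormalized mass that $\pi^{(i+1)/(L+1)}$ places on $A_k$, then compare these masses to read off $\pi_i(A_k)$. Using the ``cross terms negligible'' approximation, the unnormalized density on $A_k$ is the restriction of $\sum_{k'} w_{k'}^{i+1}\pi_{k'}^{i+1} + \sum_{k',r} w_{k'r}^{i+1}\pi_{k'r}^{i+1}$ to $A_k$. Since each $\pi_k$, $\pi_{kr}$ is a unit-density uniform on a disjoint region of volume $V_k$, $V_{kr}$ inside $A_k$, raising to the $(i+1)$-st power leaves the on-support density equal to $1$, and integration gives
$$M_{i,k} := w_k^{i+1}V_k + \sum_{r=0}^L w_{kr}^{i+1}V_{kr} = \gamma^{(i+1)^2+1-(k-(i+1))^2} + \sum_{r=0}^L \gamma^{-r^2+(2i+1)r+(i+1)},$$
after completing the square in the first exponent.

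Next I would analyze the two exponents. The first, $(i+1)^2+1-(k-(i+1))^2$, peaks at $k=i+1$ with value $(i+1)^2+1$, and drops by at least $1$ when $k\neq i+1$ (since $(k-(i+1))^2\geq 1$ for integer $k\neq i+1$). The second, $-r^2+(2i+1)r+(i+1)$, attains its integer maximum $(i+1)^2$ at both $r=i$ and $r=i+1$ and decays quadratically away from $i+\tfrac{1}{2}$. This yields the clean two-sided bounds
$$\gamma^{(i+1)^2+1}\leq M_{i,i+1}\leq \gamma^{(i+1)^2+1}\bigl(1+(L+1)/\gamma\bigr),\qquad \gamma^{(i+1)^2}\leq M_{i,k}\leq (L+2)\,\gamma^{(i+1)^2}\ \ (k\neq i+1),$$
where the lower bounds come from the single largest summand and the upper bounds from bounding every summand by the maximum and counting terms.

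Summing over $k$ and substituting $\gamma=(L+1)^3$, the total mass is dominated by the $k=i+1$ contribution: $\sum_{k'} M_{i,k'} = \gamma^{(i+1)^2+1}\bigl(1+O(m(L+2)/\gamma)\bigr) = \gamma^{(i+1)^2+1}\bigl(1+O(1/(L+1))\bigr)$. Dividing $M_{i,k}$ by this total then produces all three inequalities: $\pi_i(A_{i+1}) \geq 1/(1+O(1/(L+1))) > 1 - 1/(L+1)$; for $k\neq i+1$, $\pi_i(A_k) \leq (L+2)/(L+1)^3 < 1/(L+1)^2$ and $\pi_i(A_k) \geq 1/\bigl((L+1)^3(1+O(1/(L+1)))\bigr) > 1/(2(L+1)^3)$.

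The main obstacle is purely bookkeeping, not ideas: I need the accumulated correction from the $m-1$ subdominant modes and the $L+1$ terms in the $r$-sum to remain strictly controlled so that each of the three inequalities holds with the stated constants. This is exactly why the construction chooses $\gamma=(L+1)^3$ rather than something smaller, so that the factor-$\gamma$ gap between the dominant and subdominant exponents swamps the $O(m(L+2))$ multiplicity by a factor of at least $2$, absorbing all lower-order corrections into the clean constants appearing in the lemma.
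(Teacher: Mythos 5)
Your setup matches the paper's: write the unnormalized mass on $A_k$ at level $i$ as $M_{i,k}=w_k^{i+1}V_k+\sum_r w_{kr}^{i+1}V_{kr}$, rewrite the two exponents as $(i+1)^2+1-(k-(i+1))^2$ and $(i+1)^2+\tfrac14-\bigl(r-\tfrac{2i+1}{2}\bigr)^2$, observe both peak near $i+1$, and compare masses. The lower bound on $\pi_i(A_{i+1})$ and the lower bound on $\pi_i(A_k)$ go through under your bookkeeping. The problem is the upper bound.

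Your last chain for $k\neq i+1$ claims $\pi_i(A_k)\leq (L+2)/(L+1)^3 < 1/(L+1)^2$, but this final inequality is false: $\frac{L+2}{(L+1)^3}=\frac{L+2}{L+1}\cdot\frac{1}{(L+1)^2}>\frac{1}{(L+1)^2}$ for every $L\geq 1$. The reason you end up here is that you lower-bound the total mass only by the single dominant term $\gamma^{(i+1)^2+1}$ while upper-bounding $M_{i,k}$ by crudely counting all $L+2$ summands at the maximum $\gamma^{(i+1)^2}$. Those two estimates are not jointly tight enough. What is needed is the extra slack from the companion sum $S=\sum_r w_{kr}^{i+1}V_{kr}$, which appears identically in every mode and therefore contributes $(L+1)S\geq (L+1)\gamma^{(i+1)^2}$ to the denominator. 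The paper uses exactly this: it lower-bounds the total by $\gamma^{(i+1)^2+1}+(L+1)\gamma^{(i+1)^2}$ and (with a slightly sharper numerator count) gets $\pi_i(A_k)<\frac{L+1}{(L+1)^3+(L+1)}=\frac{1}{(L+1)^2+1}<\frac{1}{(L+1)^2}$. To repair your write-up you must replace the weak denominator bound by this one (and, to stay strict, also note that the term-by-term numerator bound $(L+2)\gamma^{(i+1)^2}$ overcounts: when $|k-(i+1)|\geq 2$ the leading term drops by a factor $\gamma^3$, and the companion sum $S$ has at most two terms hitting $\gamma^{(i+1)^2}$ with the remaining $L-1$ suppressed by $\gamma^{-2}$).
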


\begin{proof}
Fix some level $i \in \{0, \ldots, L\}$. We compute
$$\pi_i(A_k) = \frac{1}{Z_i}\left(\gamma^{2(i + 1)k - k^2 + 1} + \sum_{r = 0}^L \gamma^{(2r + 1)(i + 1) - r^2 - r}\right) \quad \forall k \in [m].$$

Since $2(i + 1)k - k^2 + 1 \leq (i + 1)^2 + 1$ and achieves equality when $k = i + 1$, we have
$$\gamma^{2(i + 1)k - k^2 + 1} \leq \gamma^{(i + 1)^2 + 1} \text{ and achieves equality when } k = i + 1,$$
and since $(2r + 1)(i + 1) - r^2 - r \leq (i + 1)^2$ and achieves equality when $r = i$ or $r = i + 1$, we have
$$\gamma^{(2r + 1)(i + 1) - r^2 - r} \leq \gamma^{(i + 1)^2} \text{ and achieves equality when } r = i \text{ or }r = i + 1.$$
Therefore, 
\begin{align*}
\pi_i(A_{i + 1}) &= \frac{\frac{1}{Z_i}\left(\gamma^{(i + 1)^2 + 1} + \sum_{r = 0}^L \gamma^{(2r + 1)(i + 1) - r^2 - r}\right)}{\sum_{k = 1}^m\left[\frac{1}{Z_i}\left(\gamma^{2(i + 1)k - k^2 + 1} + \sum_{r = 0}^L \gamma^{(2r + 1)(i + 1) - r^2 - r}\right)\right]}\\ &> \frac{\gamma^{(i + 1)^2 + 1}}{\gamma^{(i + 1)^2 + 1} + (L + 1)^2\gamma^{(i + 1)^2}}  = \frac{(L + 1)^3}{(L + 1)^3 + (L + 1)^2} >1 - \frac{1}{L + 1},
\end{align*}
where the second step is because 
$$\gamma^{(i + 1)^2 + 1} + \sum_{r = 0}^L \gamma^{(2r + 1)(i + 1) - r^2 - r} > \gamma^{(i + 1)^2 + 1}$$ 
and 
\begin{align*}
\sum_{k = 1}^m\left(\gamma^{2(i + 1)k - k^2 + 1} + \sum_{r = 0}^L \gamma^{(2r + 1)(i + 1) - r^2 - r}\right) &< \gamma^{(i + 1)^2 + 1} + \sum_{k = 1}^m\sum_{r = 0}^L\gamma^{(i + 1)^2}\\ &= \gamma^{(i + 1)^2 + 1} + (L + 1)^2\gamma^{(i + 1)^2};
\end{align*}
here, we used the fact that $m = L + 1$. For $k \neq i + 1$, we similarly have
%\widetilde{k}
\begin{align*}
\pi_i(A_k) &= \frac{\frac{1}{Z_{i}}\left(\gamma^{2(i + 1)k - k^2 + 1} + \sum_{r = 0}^L \gamma^{(2r + 1)(i + 1) - r^2 - r}\right)}{\sum_{h = 1}^m\left[\frac{1}{Z_{i}}\left(\gamma^{2(i + 1)h - h^2 + 1} + \sum_{r = 0}^L \gamma^{(2r + 1)(i + 1) - r^2 - r}\right)\right]}\\ &> \frac{\gamma^{(i + 1)^2}}{\gamma^{(i + 1)^2 + 1} + (L + 1)^2\gamma^{(i + 1)^2}} = \frac{1}{(L + 1)^3 + (L + 1)^2} > \frac{1}{2(L + 1)^3}
\end{align*}
and
$$\pi_i(A_k) < \frac{(L + 1)\gamma^{(i + 1)^2}}{\gamma^{(i + 1)^2 + 1} + (L + 1)\gamma^{(i + 1)^2}} = \frac{L + 1}{(L + 1)^3 + (L + 1)} < \frac{1}{(L + 1)^2}.$$
This finishes the proof of Lemma~\ref{lem11}.
\end{proof}

Next, we provide a lower bound on the bottleneck ratio.

\begin{lemma}
\label{sb}
$B > \frac{1}{(L + 1)^7}$, where $B$ is the bottleneck ratio of this chain.
\end{lemma}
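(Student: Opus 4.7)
The plan is to fix $k \in [m]$, lower-bound $P_k := \prod_{i=1}^L \min\{1, \pi_{i-1}(A_k)/\pi_i(A_k)\}$, and observe that $B = \min_k P_k$. Write $a_i := \pi_i(A_k)$ and $r_i := a_{i-1}/a_i$. Lemma~\ref{lem11} tells us that $a_{k-1} > 1 - 1/(L+1)$ is the unique ``dominant'' value, while $a_i \in \bigl(\tfrac{1}{2(L+1)^3}, \tfrac{1}{(L+1)^2}\bigr)$ for every other $i$. Hence the sequence $(a_i)_{i=0}^L$ has a single sharp spike at position $k-1$ and is otherwise trapped in a narrow band around $\Theta(1/(L+1)^3)$.

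The key algebraic tool is the pointwise identity $\min(1, r)\cdot \max(1, r) = r$, which together with the telescoping product $\prod_i r_i = a_0/a_L$ gives
$$P_k \;=\; \frac{a_0/a_L}{\prod_{i=1}^L \max(1, r_i)}.$$
Since $a_0 \geq 1/(2(L+1)^3)$ and $a_L \leq 1$ by Lemma~\ref{lem11}, we have $a_0/a_L \geq 1/(2(L+1)^3)$ unconditionally. The remaining task is to show $\prod_i \max(1, r_i) = O((L+1)^4)$. Heuristically, $(a_i)$ is essentially unimodal, so only the single descent off the spike (the index $i = k$, when $k \leq L$) produces a large factor $r_k = a_{k-1}/a_k \leq 2(L+1)^3$; every other $r_i$ should be nearly $1$.

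To make this rigorous I would open the explicit formula $\pi_i(A_k) = N_i(k)/Z_i$ with $N_i(k) = \gamma^{2(i+1)k - k^2 + 1} + S_i$ and $Z_i = T_i + m S_i$ from the proof of Lemma~\ref{lem11}. Tracking leading and sub-leading terms shows that in the non-dominant region, $a_i = c_{i,k}/(L+1)^3 \cdot \bigl(1 + O(1/(L+1)^2)\bigr)$, where $c_{i,k} \in \{1, 2, 3\}$ is a small constant determined by whether $i$ is in the ``bulk'' (both $r = i$ and $r = i+1$ lie inside $[0,L]$ for $S_i$), at the right boundary $i = L$ (where $r = i+1$ drops out), or at a ``shoulder'' index $|i+1-k| = 1$ (extra contribution from $\gamma^{2(i+1)k-k^2+1}$). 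Consequently the non-spike ratios $r_i$ take one of finitely many near-constant values (approximately $1$, $2$, $3/2$, or $2/3$), and the non-unit values occur at only $O(1)$ positions; the remaining bulk ratios contribute $\bigl(1 + O(1/(L+1)^2)\bigr)^L = O(1)$. Altogether $\prod_i \max(1, r_i) = O((L+1)^3)$, hence $P_k = \Omega(1/(L+1)^6) > 1/(L+1)^7$.

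The main obstacle is the combinatorial bookkeeping of the shoulder and boundary corrections: one has to enumerate exactly which integer/half-integer summation indices fall inside the ranges of $S_i$ and $T_i$ as a function of $i$ and $k$, and then check that the resulting $O(1)$ deviations never combine multiplicatively into a super-polynomial loss across the $L$ indices.
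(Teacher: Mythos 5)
Your reformulation via $\min(1,r)\max(1,r)=r$ and the telescoping $\prod_i r_i = a_0/a_L$ is correct and is a genuinely different decomposition from the paper's. The paper works directly on the $\min$ side: for each $i$ it lower-bounds $\pi_{i-1}(A_k)/\pi_i(A_k)$ in four cases ($k\neq i,i+1,i+2$; $k=i$; $k=i+1$; $k=i+2$), finding that only the one index $i=k-1$ contributes a factor as small as $\tfrac{1}{2(L+1)^3}$, one index contributes $\tfrac{2}{3}$, and the remaining $\leq L-2$ indices contribute at least $1-\tfrac{1}{L+1}$ each, which multiply to $\geq 1/e$; this gives the stronger bound $B>\tfrac{1}{3e(L+1)^3}$. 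Your route factors out $a_0/a_L$ up front and shifts the burden to upper-bounding $\prod_i\max(1,r_i)$, which is a pleasant symmetry but, as you note, requires exactly the dual case analysis.

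The one thing I would flag as more than bookkeeping: the step from Lemma~\ref{lem11} alone to ``$\prod_i\max(1,r_i)=O((L+1)^3)$'' is a real gap, not a detail, and you are right to be cautious about it. Lemma~\ref{lem11} only pins each non-dominant $a_i$ inside the band $\bigl(\tfrac{1}{2(L+1)^3},\tfrac{1}{(L+1)^2}\bigr)$, which is a factor $2(L+1)$ wide, so a priori each bulk ratio $r_i$ could be as large as $2(L+1)$ and the product could be exponentially large in $L$. So the finer expansion you sketch (tracking the constants $c_{i,k}\in\{1,2,3\}$ and showing that the non-unit deviations are $O(1/(L+1)^2)$ multiplicatively at all but $O(1)$ indices) is essential, not optional. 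The needed computation is essentially the one the paper carries out for the lower bounds (its intermediate inequalities around Eq.~(\ref{eq01})--(\ref{eq03}) give $r_i>1-\tfrac{1}{(L+1)^2}$ for the bulk indices, and the matching upper bound $r_i<1+O(1/(L+1)^2)$ follows by the same leading-order expansion of numerator and denominator), so if you grind through it your argument closes and in fact gives roughly $B=\Omega(1/(L+1)^3)$, comparable to the paper's. As written, though, the proposal is an outline with the hardest estimate asserted rather than proved.
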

\begin{proof}
Fixing $i \in [L]$ and $k \neq i, i + 1 \text{ or }i + 2$, we first provide a lower bound on $\frac{\pi_{i - 1}(A_k)}{\pi_i(A_k)}$. We compute
\begin{align*}
    \pi_i(A_k) &= \frac{\gamma^{2(i + 1)k - k^2 + 1} + \sum_{r = 0}^L \gamma^{(2r + 1)(i + 1) - r^2 - r}}{\sum_{h = 1}^m\left(\gamma^{2(i + 1)h - h^2 + 1} + \sum_{r = 0}^L \gamma^{(2r + 1)(i + 1) - r^2 - r}\right)}\\
    &= \frac{\gamma^{2(i + 1)k - k^2 + 1} + \sum_{r = 0}^L \gamma^{(2r + 1)(i + 1) - r^2 - r}}{\sum_{r = 1}^{L + 1}\gamma^{2(i + 1)r - r^2 + 1} + (L + 1)\sum_{r = 0}^L \gamma^{(2r + 1)(i + 1) - r^2 - r}}\\
    &= \frac{\gamma^{1 - (i + 1 - k)^2 }+ \sum_{r = 0}^L \gamma^{-(i + 1 - r)(i - r)}}{\sum_{r = 1}^{L + 1}\gamma^{1 - (i + 1 - r)^2} + (L + 1)\sum_{r = 0}^L\gamma^{-(i + 1 - r)(i - r)}},
\end{align*}
and
$$\pi_{i - 1}(A_k) = \frac{\gamma^{1 - (i - k)^2} + \sum_{r = 0}^L\gamma^{-(i - r)(i - 1 - r)}}{\sum_{r = 1}^{L + 1}\gamma^{1 - (i - r)^2} + (L + 1)\sum_{r = 0}^L\gamma^{-(i - r)(i - 1 - r)}}.$$

Because $k \neq i, i + 1 \text{ or }i + 2$, $\gamma^{1 - (i + 1 - k)^2} \leq \frac{1}{\gamma} = \frac{1}{(L + 1)^3}$, and $\gamma^{-(i + 1 - r)(i - r)} = 1$ when $r = i$ or $i + 1$ and $\gamma^{-(i + 1 - r)(i - r)} \leq \frac{1}{\gamma} = \frac{1}{(L + 1)^3}$ otherwise, we have
\begin{equation}
\label{eq01}
    \gamma^{1 - (i + 1 - k)^2 }+ \sum_{r = 0}^L \gamma^{-(i + 1 - r)(i - r)} < 2 + (L + 1)\frac{1}{(L + 1)^3} = 2 + \frac{1}{(L + 1)^2}.
\end{equation}
Also, because $\gamma^{-(i - r)(i - 1 - r)} = 1$ when $r = i$ or $i - 1$, we have
\begin{equation}
\label{eq02}
    \gamma^{1 - (i - k)^2} + \sum_{r = 0}^L\gamma^{-(i - r)(i - 1 - r)} > 2,
\end{equation}
which yields
\begin{equation*}
\label{eq000}
   \frac{\gamma^{1 - (i - k)^2} + \sum_{r = 0}^L\gamma^{-(i - r)(i - 1 - r)}}{\gamma^{1 - (i + 1 - k)^2 }+ \sum_{r = 0}^L \gamma^{-(i + 1 - r)(i - r)}} > \frac{2}{2 + \frac{1}{(L + 1)^2}} > 1 - \frac{1}{(L + 1)^2}. 
\end{equation*}

On the other hand, because $\gamma^{1 - (i - r)^2} = (L + 1)^3$ when $r = i$ and $\gamma^{1 - (i - r)^2} \leq 1$ otherwise, we have
$$\sum_{r = 1}^{L + 1}\gamma^{1 - (i + 1 - r)^2} + (L + 1)\sum_{r = 0}^L\gamma^{-(i + 1 - r)(i - r)} < (L + 1)^3 + L + 2(L + 1) + (L + 1)^2\frac{1}{(L + 1)^3} < (L + 1)^3 + 3(L + 1),$$
$$\sum_{r = 1}^{L + 1}\gamma^{1 - (i - r)^2} + (L + 1)\sum_{r = 0}^L\gamma^{-(i - r)(i - 1 - r)} > (L + 1)^3 + 2(L + 1),$$
which yields
\begin{equation}
\label{eq03}
   \frac{\sum_{r = 1}^{L + 1}\gamma^{1 - (i + 1 - r)^2} + (L + 1)\sum_{r = 0}^L\gamma^{-(i + 1 - r)(i - r)}}{\sum_{r = 1}^{L + 1}\gamma^{1 - (i - r)^2} + (L + 1)\sum_{r = 0}^L\gamma^{-(i - r)(i - 1 - r)}} > \frac{(L + 1)^3 + 2(L + 1)}{(L + 1)^3 + 3(L + 1)} > 1 - \frac{1}{(L + 1)^2} .
\end{equation}

Therefore, we have
$$\frac{\pi_{i - 1}(A_k)}{\pi_i(A_k)} > \left(1 - \frac{1}{(L + 1)^2}\right)^2 > 1 - \frac{1}{L + 1}.$$

When $k = i$, by Lemma~\ref{lem11} we have
$$\frac{\pi_{i - 1}(A_k)}{\pi_i(A_k)} > \frac{1 - \frac{1}{L + 1}}{\frac{1}{(L + 1)^2}} > 1.$$

When $k = i + 1$, again by Lemma~\ref{lem11}, we have
$$\frac{\pi_{i - 1}(A_k)}{\pi_i(A_k)} > \pi_{i - 1}(A_k) > \frac{1}{2(L + 1)^3}.$$

When $k = i + 2$, we have $\gamma^{1 - (i + 1 - k)^2} = 1$, so Eq~(\ref{eq01}) weakens to
$$\gamma^{1 - (i + 1 - k)^2 }+ \sum_{r = 0}^L \gamma^{-(i + 1 - r)(i - r)} < 3 + \frac{1}{(L + 1)^2},$$
while Eq~(\ref{eq02}) and Eq~(\ref{eq03}) continue to hold. Thus, we have
$$\frac{\gamma^{1 - (i - k)^2} + \sum_{r = 0}^L\gamma^{-(i - r)(i - 1 - r)}}{\gamma^{1 - (i + 1 - k)^2 }+ \sum_{r = 0}^L \gamma^{-(i + 1 - r)(i - r)}} > \frac{2}{3 + \frac{1}{(L + 1)^2}} > \frac{2}{3}\left(1 - \frac{1}{(L + 1)^2}\right),$$
and
$$\frac{\pi_{i - 1}(A_k)}{\pi_i(A_k)} > \frac{2}{3}\left(1 - \frac{1}{(L + 1)^2}\right)^2 > \frac{2}{3}\left(1 - \frac{1}{L + 1}\right).$$

Synthesizing the above results, we can conclude
$$B = \min_{k \in [m]}\prod_{i = 1}^L \min \left\{1, \frac{\pi_{i - 1}(A_k)}{\pi_i(A_k)}\right\} > \frac{2}{3}\left(1 - \frac{1}{L + 1}\right)^L\frac{1}{2(L + 1)^3} > \frac{1}{3e(L + 1)^3} > \frac{1}{(L + 1)^7},$$
where the last step is because $L + 1 \geq 2$. This finishes the proof of Lemma~\ref{sb}.
\end{proof}

Now we go back to prove Theorem~\ref{upper bound}.

\begin{proof}[Proof of Theorem \ref{upper bound}]
Let $\mathcal{X}_{\lambda}$ be the set of states that can be reached from $\lambda = (1, \ldots, m)$, i.e., $\lambda_i = i + 1$ for all $i \in \{0, \ldots, L\}$, by only swapping samples between adjacent levels, and $S$ be the subset of $\mathcal{X}_{\lambda}$ containing all the states that can be reached from $\lambda$ by only swapping samples between adjacent levels and with all intermediate states differing from $\lambda$ in at most $\lfloor\log L\rfloor - 1$ levels. Let $\overline{\pi}_{\lambda} = \frac{\overline{\pi}}{\overline{\pi}(\mathcal{X}_{\lambda})}$ be the density of the projected chain restricted to $\mathcal{X}_\lambda$. For $X \subset \mathcal{X}_\lambda$, let

$$\varphi(X) = \frac{\sum_{\xi \in X, \widetilde{\xi} \in X^C}\overline{\pi}_\lambda(\xi)\overline{P}_{pt}(\xi, \widetilde{\xi})}{\min\{\overline{\pi}_\lambda(X), \overline{\pi}_\lambda(X^C)\}},$$
where $X^C = \mathcal{X}_{\lambda} \setminus X$; let $\varphi^* = \min_{X : \overline{\pi}_\lambda(X) \leq \frac{1}{2}}\varphi(X)$. By Cheeger's inequality,
\begin{equation}
\label{eq5}
    \normalfont{\textbf{Gap}}(P_{pt}) \leq 2\varphi^* \leq 2\varphi(S),
\end{equation}
and we upper bound $\varphi(S)$ in the following. We bound each part of $\varphi(S)$ separately in the following lemmas.

\begin{lemma}
\label{lem14}
$\overline{\pi}_\lambda(S) > \frac{1}{2e}$.
\end{lemma}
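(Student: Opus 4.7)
The plan is to observe that $\lambda$ itself lies in $S$ and that the density of the projected chain is heavily concentrated at the diagonal state, so bounding $\overline{\pi}_\lambda(\{\lambda\})$ from below already suffices. The whole point of the construction in Section~4 is that the ``correct'' mode at level $i$ is $A_{i+1}$ and carries probability close to $1$, so the single state $\lambda=(1,2,\ldots,m)$ captures a constant fraction of the $\overline{\pi}$-mass, which is much more than $1/(2e)$.

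First I would note that $\lambda$ is reachable from itself via the empty sequence of swaps, and since there are no intermediate states at all, the constraint that every intermediate state differ from $\lambda$ in at most $\lfloor \log L\rfloor - 1$ levels holds vacuously. Thus $\lambda\in S$ and $\overline{\pi}_\lambda(S) \geq \overline{\pi}_\lambda(\{\lambda\}) = \overline{\pi}(\lambda)/\overline{\pi}(\mathcal{X}_\lambda)$. Applying Lemma~\ref{lem11} coordinate by coordinate, using $\lambda_i = i+1$, I would write
$$\overline{\pi}(\lambda) = \prod_{i=0}^{L}\pi_i(A_{i+1}) > \left(1 - \frac{1}{L+1}\right)^{L+1}.$$

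Next I would observe that $\overline{\pi}$ is a probability distribution on $[m]^{L+1}$: its total mass factors as $\prod_i \sum_k \pi_i(A_k) = 1$. Consequently $\overline{\pi}(\mathcal{X}_\lambda)\leq 1$, so $\overline{\pi}_\lambda(\{\lambda\}) \geq \overline{\pi}(\lambda)$. To finish, I would invoke the standard fact that $(1-1/n)^n$ is monotonically increasing in $n\geq 1$ with limit $1/e$, so for $n = L+1 \geq 2$ we have $\overline{\pi}(\lambda) > (1-1/2)^2 = 1/4 > 1/(2e)$, which is the stated bound.

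There is essentially no technical obstacle: the construction was engineered precisely so that the diagonal state dominates $\overline{\pi}$, and the only numerical check needed is the elementary inequality $1/(L+1)^{L+1}\cdot (L+1-1)^{L+1} \geq 1/4$ for $L\geq 1$. No examination of other elements of $S$, and no detailed analysis of $\mathcal{X}_\lambda$, is necessary for this lemma; those finer details will instead be needed for bounding the boundary mass in the subsequent step of the Cheeger argument.
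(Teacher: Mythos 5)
Your proof is correct and follows essentially the same route as the paper's: observe that $\lambda\in S$ so $\overline{\pi}_\lambda(S)\geq\overline{\pi}(\lambda)$, apply Lemma~\ref{lem11} coordinatewise to get $\overline{\pi}(\lambda)>(1-\tfrac{1}{L+1})^{L+1}$, and finish with the elementary monotonicity of $(1-1/n)^n$. Your writeup is a bit more explicit about why $\lambda\in S$ and why $\overline{\pi}(\mathcal{X}_\lambda)\leq 1$, but the substance is identical.
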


\begin{proof}
By Lemma~\ref{lem11},
\[
\overline{\pi}_\lambda(S) \geq \overline{\pi}(\lambda) = \prod_{i = 0}^L \pi_i(A_i) > \left(1 - \frac{1}{L + 1}\right)^{L + 1} > \frac{1}{2e}.
\qedhere
\]
\end{proof}

\begin{lemma}
\label{lem15}
$\overline{\pi}_\lambda(\mathcal{X}_{\lambda}\setminus S) > \frac{1}{4e(L + 1)^6}$.
\end{lemma}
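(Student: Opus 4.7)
The plan is to exhibit a single state $\widetilde{\lambda} \in \mathcal{X}_\lambda \setminus S$ whose stationary mass already witnesses the claimed lower bound; since $\overline{\pi}_\lambda(X) \geq \overline{\pi}(X)$ for every $X \subset \mathcal{X}_\lambda$, producing one such $\widetilde{\lambda}$ with $\overline{\pi}(\widetilde{\lambda}) > \frac{1}{4e(L+1)^6}$ is enough.

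Specifically, I would take $\widetilde{\lambda}$ to be the permutation of $\lambda = (1, 2, \ldots, m)$ that transposes the labels at the two extremal levels: $\widetilde{\lambda}_0 = m$, $\widetilde{\lambda}_L = 1$, and $\widetilde{\lambda}_i = i+1$ for $1 \leq i \leq L-1$. Because $\widetilde{\lambda}_L = 1 = \lambda_0$, Lemma~\ref{lem8} (applied with the chosen $\widetilde{\lambda}$) forces every sequence of adjacent-level swaps connecting $\lambda$ to $\widetilde{\lambda}$ to pass through some intermediate state differing from $\lambda$ in at least $\lfloor \log_2 L \rfloor$ levels, which exceeds the tolerance $\lfloor \log L \rfloor - 1$ used to define $S$. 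Hence no admissible path for membership in $S$ exists, so $\widetilde{\lambda} \in \mathcal{X}_\lambda \setminus S$.

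It remains to lower bound $\overline{\pi}(\widetilde{\lambda}) = \prod_{i=0}^L \pi_i(A_{\widetilde{\lambda}_i})$ using Lemma~\ref{lem11}. For each middle level $1 \leq i \leq L-1$ we have $\widetilde{\lambda}_i = i+1$, so $\pi_i(A_{\widetilde{\lambda}_i}) > 1 - \frac{1}{L+1}$. For level $0$, since $\widetilde{\lambda}_0 = m \neq 1$, we have $\pi_0(A_m) > \frac{1}{2(L+1)^3}$, and similarly $\pi_L(A_1) > \frac{1}{2(L+1)^3}$ since $1 \neq L+1$. Multiplying,
\[
\overline{\pi}(\widetilde{\lambda}) > \frac{1}{4(L+1)^6}\Bigl(1 - \tfrac{1}{L+1}\Bigr)^{L-1} > \frac{1}{4e(L+1)^6},
\]
using the elementary estimate $(1-1/n)^{n-2} > 1/e$ for $n \geq 2$. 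Since $\overline{\pi}_\lambda(\mathcal{X}_\lambda \setminus S) \geq \overline{\pi}(\widetilde{\lambda})$, the desired bound follows.

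The only conceptual care required is the correct reading of the defining clause of $S$: it asks for the existence of \emph{some} path with bounded excursion, whereas Lemma~\ref{lem8} rules out \emph{every} such path — precisely the implication needed to place $\widetilde{\lambda}$ outside $S$. Beyond this, the argument is a short direct calculation, and I do not anticipate any real obstacle.
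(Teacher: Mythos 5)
Your proof is correct and takes essentially the same approach as the paper: the paper defines the set $\widetilde{S}$ of all states with levels $0$ and $L$ swapped and writes its mass as a sum over permutations of the middle levels, but it immediately discards all terms except the identity permutation, which is exactly your single state $\widetilde{\lambda} = (m, 2, \ldots, m-1, 1)$. The invocation of Lemma~\ref{lem8} to place this state outside $S$ and the numerical estimate via Lemma~\ref{lem11} coincide with the paper's computation.
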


\begin{proof}
Let $\mathcal{P}$ be the set of all permutations of $\{2, \ldots, m - 1\}$. Let $\widetilde{S}$ be the set of states with $\lambda_0$ at the $L^{\text{th}}$ level and $\lambda_L$ at the $0^{\text{th}}$ level. By Lemma~\ref{lem8}, $\widetilde{S} \subset \mathcal{X}_\lambda\setminus S$, and thus
\begin{align*}
    \overline{\pi}_\lambda(S^C) \geq \overline{\pi}(S^C)  \geq \overline{\pi}(\widetilde{S}) &\geq \pi_0(A_m)\pi_L(A_1)\sum_{\sigma \in \mathcal{P}}\prod_{i = 1}^{L - 1}\pi_i(A_{\sigma(i + 1)})\\
    &> \left(\frac{1}{2(L + 1)^3}\right)^2\:\prod_{i = 1}^{L - 1}\pi_i(A_{i + 1})\\ &> \frac{1}{4(L + 1)^6} \left(1 - \frac{1}{L + 1}\right)^{L - 1} > \frac{1}{4e(L + 1)^6}.
\end{align*}
where the fourth and the fifth step are by Lemma~\ref{lem11}.
\end{proof}

\begin{lemma}
\label{lem13}
$\sum_{\xi \in S, \widetilde{\xi} \in S^C}\overline{\pi}_\lambda(\xi)\overline{P}_{pt}(\xi, \widetilde{\xi}) \leq 4e\left(\frac{1}{L + 1}\right)^{\lfloor \log L\rfloor - 2}$.
\end{lemma}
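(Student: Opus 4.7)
The plan is to convert the sum of directed flows from $S$ to $S^C$ into a sum of $\overline{\pi}_\lambda$-masses on a thin shell of ``boundary states'' via reversibility, and then exploit the concentration inequalities from Lemma~\ref{lem11} to show that this shell is extremely light. The first key observation is structural: if $\xi \in S$ and $(\xi, \widetilde{\xi})$ is a one-swap edge with $\widetilde{\xi} \in S^C$, then $\widetilde{\xi}$ must differ from $\lambda$ in between $\lfloor \log L\rfloor$ and $\lfloor \log L\rfloor + 1$ coordinates. The lower bound is by contraposition---if $\widetilde{\xi}$ had at most $\lfloor \log L\rfloor - 1$ differences from $\lambda$, then appending the swap to any threshold-respecting path witnessing $\xi \in S$ would place $\widetilde{\xi}$ in $S$---and the upper bound follows because an adjacent swap changes the coordinate-wise diff-count by at most two.

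Next, since $\overline{P}_{pt}$ is $\overline{\pi}$-reversible (hence $\overline{\pi}_\lambda$-reversible on $\mathcal{X}_\lambda$), I would apply reversibility to flip the direction of the flow:
\begin{align*}
\sum_{\xi \in S,\,\widetilde{\xi} \in S^C}\overline{\pi}_\lambda(\xi)\overline{P}_{pt}(\xi, \widetilde{\xi}) = \sum_{\widetilde{\xi} \in S^C}\overline{\pi}_\lambda(\widetilde{\xi})\sum_{\xi \in S}\overline{P}_{pt}(\widetilde{\xi}, \xi) \leq \sum_{\widetilde{\xi} \in S^C_0}\overline{\pi}_\lambda(\widetilde{\xi}),
\end{align*}
where $S^C_0 \subset S^C$ collects the states whose Hamming distance to $\lambda$ lies in $\{\lfloor \log L\rfloor, \lfloor \log L\rfloor + 1\}$; the remaining terms in $S^C$ vanish by the structural observation, while $\sum_{\xi}\overline{P}_{pt}(\widetilde{\xi}, \xi) \leq 1$ absorbs the inner sum.

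To bound $\overline{\pi}_\lambda(S^C_0)$, I would pair a mass estimate from Lemma~\ref{lem11} with a combinatorial count over permutations. Any permutation $\widetilde{\xi}$ of $(1, \ldots, m)$ with exactly $n$ differing coordinates satisfies $\overline{\pi}(\widetilde{\xi}) = \prod_i \pi_i(A_{\widetilde{\xi}_i}) < (L+1)^{-2n}$, while the number of such permutations is at most $\binom{m}{n}n! \leq m^n = (L+1)^n$, so the total $\overline{\pi}$-mass at Hamming distance exactly $n$ is below $(L+1)^{-n}$. Summing over $n \in \{\lfloor\log L\rfloor, \lfloor\log L\rfloor + 1\}$ gives $\overline{\pi}(S^C_0) < 2(L+1)^{-\lfloor\log L\rfloor}$, and since $\overline{\pi}(\mathcal{X}_\lambda) \geq \overline{\pi}(\lambda) > 1/(2e)$ (as in Lemma~\ref{lem14}), we conclude $\overline{\pi}_\lambda(S^C_0) < 4e(L+1)^{-\lfloor\log L\rfloor}$, which is actually stronger than the claimed bound $4e(L+1)^{-(\lfloor\log L\rfloor - 2)}$.

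The delicate step is engineering the cancellation between the combinatorial count of permutations at Hamming distance $n$ from $\lambda$, which grows like $(L+1)^n$, and the stationary mass per such state, which decays like $(L+1)^{-2n}$ by Lemma~\ref{lem11}; it is this quadratic-versus-linear mismatch that produces the decisive $(L+1)^{-n}$ factor and, in combination with $B > (L+1)^{-7}$ from Lemma~\ref{sb}, ultimately yields the $O(B^{O(\log L)})$ upper bound on the spectral gap. Everything else is essentially routine rearrangement once reversibility is invoked.
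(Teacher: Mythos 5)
Your proof is correct, and it uses essentially the same ideas as the paper's but routed in mirror image. The paper makes the observation about the \emph{source} of the edge: for $\xi \in S$ and $\widetilde{\xi} \in S^C$ with $\overline{P}_{pt}(\xi,\widetilde{\xi})>0$, $\xi$ must have exactly $\lfloor\log L\rfloor - 1$ or $\lfloor\log L\rfloor - 2$ coordinates out of place, so it drops the transition probabilities via $\sum_{\widetilde{\xi}}\overline{P}_{pt}(\xi,\widetilde{\xi}) \le 1$ and bounds the inner-shell mass $\overline{\pi}_\lambda(S_1)+\overline{\pi}_\lambda(S_2)$, with no appeal to reversibility. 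You instead flip the flow with reversibility and bound the \emph{outer} shell (Hamming distance $\lfloor\log L\rfloor$ or $\lfloor\log L\rfloor+1$), using the same $\binom{m}{n}n! \le (L+1)^n$ count and the same per-state bound $\pi_i(A_k)<(L+1)^{-2}$ from Lemma~\ref{lem11}. Because the outer shell sits two positions further from $\lambda$, you land on $4e(L+1)^{-\lfloor\log L\rfloor}$, a factor of $(L+1)^2$ tighter than the paper's $4e(L+1)^{-(\lfloor\log L\rfloor-2)}$; either version is more than enough for Theorem~\ref{upper bound}. Both your structural claim (that $\widetilde{\xi}$ can land at distance $\lfloor\log L\rfloor+1$) and the paper's (that $\xi$ can start at distance $\lfloor\log L\rfloor-2$) rest on the same fact that a single adjacent swap of a permutation changes the Hamming distance to $\lambda$ by at most two, so the relevant boundary spans two consecutive shells on whichever side one counts.
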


\begin{proof}
Let $S_1$ be the set of states with $\lfloor \log L\rfloor - 1$ levels out of place and $S_2$ be the set of states with $\lfloor \log L\rfloor - 2$ levels out of place. Observe that, in order for $\xi \in S$, $\widetilde{\xi} \in S^C$, and $\overline{P}_{pt}(\xi, \widetilde{\xi}) > 0$, we have either $\xi \in S_1$ or $\xi \in S_2$. Thus,
\begin{align}
\sum_{\xi \in S, \widetilde{\xi} \in S^C}\overline{\pi}_\lambda(\xi)\overline{P}_{pt}(\xi, \widetilde{\xi})
&= \sum_{\xi \in S_1, \widetilde{\xi} \in S^C}\overline{\pi}_\lambda(\xi)\overline{P}_{pt}(\xi, \widetilde{\xi}) + 
\sum_{\xi \in S_2, \widetilde{\xi} \in S^C}\overline{\pi}_\lambda(\xi)\overline{P}_{pt}(\xi, \widetilde{\xi})\\
&\leq \sum_{\xi \in S_1}\overline{\pi}_\lambda(\xi) + \sum_{\xi\in S_2}\overline{\pi}_\lambda(\xi).
\label{eq6}
\end{align}

For each $\xi \in S_1$, there are ${L + 1\choose \lfloor \log L\rfloor - 1}$ choices for which samples are out of place, and there are $(\lfloor \log L\rfloor - 1)!$ permutations of samples which are out of place. Also, by Lemma~\ref{lem11}, $\pi_i(A_k) < \frac{1}{(L + 1)^2}$ for all $i \in \{0, \ldots, L\}$ and $k \neq i + 1$. Therefore, we have

$$\sum_{\xi \in S_1}\overline{\pi}(\xi) \leq {L + 1\choose \lfloor \log L\rfloor - 1}\cdot (\lfloor \log L\rfloor - 1)! \cdot \left(\frac{1}{(L + 1)^2}\right)^{\lfloor \log L\rfloor - 1},$$
and because $\overline{\pi}(\mathcal{X}_\lambda) \geq \overline{\pi}(\lambda) > \frac{1}{2e}$,
\begin{align}
\sum_{\xi \in S_1}\overline{\pi}_\lambda(\xi) = \frac{1}{\overline{\pi}(\mathcal{X}_{\lambda})}\sum_{\xi \in S_1}\overline{\pi}(\xi) &< 2e {L + 1\choose \lfloor \log L\rfloor - 1}\cdot (\lfloor \log L\rfloor - 1)! \cdot \left(\frac{1}{(L + 1)^2}\right)^{\lfloor \log L\rfloor - 1}\\
&\leq 2e(L + 1)^{\lfloor \log L\rfloor - 1} \left(\frac{1}{(L + 1)^2}\right)^{\lfloor \log L\rfloor - 1} = 2e\left(\frac{1}{L + 1}\right)^{\lfloor \log L\rfloor - 1}.
\label{eq7}
\end{align}

%\frac{1}{\overline{\pi}_\lambda(X)}\left[{L + 1 \choose \lfloor \log L \rfloor - 1}\left(\lfloor \log L \rfloor - 1\right)!\right]

Similarly, 
\begin{align}
\label{eq8}
    \sum_{\xi \in S_2}\overline{\pi}_\lambda(\xi) < 2e\left(\frac{1}{L + 1}\right)^{\lfloor \log L\rfloor - 2}.
\end{align}

Plugging Eq~(\ref{eq7}) and~(\ref{eq8}) back to Eq~(\ref{eq6}) gives

\[\sum_{\xi \in S, \widetilde{\xi} \in S^C}\overline{\pi}_\lambda(\xi)\overline{P}_{pt}(\xi, \widetilde{\xi}) < 4e\left(\frac{1}{L + 1}\right)^{\lfloor \log L\rfloor - 2}.\qedhere\]

\end{proof}

Combining Lemma~\ref{sb},~\ref{lem14},~\ref{lem15},~\ref{lem13} and Eq~(\ref{eq5}) gives
\begin{align*}
 \normalfont{\textbf{Gap}}(P_{pt}) \leq 2\varphi^* \leq 2\varphi(S)&= \frac{2\sum_{\xi \in S, \widetilde{\xi} \in S^C}\overline{\pi}_\lambda(\xi)\overline{P}_{pt}(\xi, \widetilde{\xi})}{\min\{\overline{\pi}_\lambda(S), \overline{\pi}_\lambda(S^C)\}}\\ 
 &< \frac{8e\left(\frac{1}{L + 1}\right)^{\lfloor \log L\rfloor - 2}}{\min\left\{\frac{1}{2e}, \frac{1}{4e(L + 1)^6}\right\}} = 32e^2 \left(\frac{1}{L + 1}\right)^{\lfloor\log L\rfloor - 8} < O\left( B^{O(\log L)}\right),
\end{align*}
which finishes the proof of Theorem~\ref{upper bound}.
\end{proof}

\section{Conclusion and Future Directions}
The main technical contribution of this paper is an improved lower bound on spectral gap that has a polynomial dependence on all parameters except $\log L$ for parallel tempering, together with a hypothetical upper bound. However, this is just a small step towards understanding the mixing time of these algorithms, and we present some open questions here. First, it's interesting to explore whether there exists a natural example in which we still have an upper bound on spectral gap that exponentially depends on $\log L$. Second, noticing that \cite{Simulated_Tempering} provides a lower bound on spectral gap for simulated tempering, which is another effective algorithm for sampling from multimodal distributions, that has a polynomial dependence on all parameters for a wide range of distributions, it's interesting to see whether there exists an instance where simulated tempering works well while parallel tempering doesn’t. Furthermore, one can investigate how our bounds can be applied to real distributions to obtain better guarantees on mixing time. Note that $\log L$ is only a lower bound, and the bound might be better under more assumptions.

\newpage
\bibliographystyle{plain}
\bibliography{ref}

\end{document}